\documentclass[12pt]{article}
\usepackage[utf8]{inputenc}

\title{Knowledge Distillation under Teacher Misspecification:\\
An Order-Parameter Analysis of the Gap between\\ Teacher Mimicry and Task Performance}

\author{
Kazuyuki Hara$^{1}$ and Hideitsu Hino$^{2}$\\
\normalsize{$^{1}$ Nihon University}, \normalsize{$^{2}$ The Institute of Statistical Mathematics}
}
\usepackage{mathtools}
\mathtoolsset{showonlyrefs=true}

\usepackage{makeidx}
\usepackage{amssymb}
\usepackage{amsthm}
\usepackage{amsmath}
\usepackage{amsfonts}
\usepackage{bm}
\usepackage{natbib}
\usepackage{graphicx}
\usepackage{algorithm,algorithmic}
\usepackage{subcaption}

\newcommand{\dmiss}{\delta_{\mathrm{miss}}}   % mismatch strength
\newcommand{\Ktot}{K_{\mathrm{tot}}}          % student capacity
\newcommand{\Ets}{E_{ts}}                     % distillation error
\newcommand{\Etzs}{E_{t_0 s}}                 % true error
\newcommand{\Etzt}{E_{t_0 t}}                 % teacher quality
\newcommand{\nsamp}{n_{\mathrm{samp}}}

\theoremstyle{definition}
\newtheorem{theorem}{Theorem}
\newtheorem*{theorem*}{Theorem}

\newtheorem*{definition*}{Definition}
\theoremstyle{definition}
\newtheorem{lemma}[theorem]{Lemma}
\newtheorem*{lemma*}{Lemma}
\newtheorem{corollary}[theorem]{Corollary}
\newtheorem*{corollary*}{Corollary}
\newtheorem{proposition}[theorem]{Proposition}
\newtheorem*{proposition*}{Proposition}
\newtheorem{remark}[theorem]{Remark}
\newtheorem*{remark*}{Remark}

\newtheorem*{example*}{Example}

\newtheorem*{fact*}{Fact}

\makeindex

\begin{document}

\maketitle

\begin{abstract}
Knowledge distillation trains a small student model to reproduce the outputs of a large teacher model, and its progress is typically monitored through the teacher--student discrepancy. The quantity of ultimate interest, however, is the student's error with respect to the true task. We study the relation between these two objectives in a minimal three-party model, a true teacher (generative model), a teacher, and a student, all soft committee machines, in which the true teacher contains a shared latent factor that the teacher cannot represent, with mismatch strength controlled by a single scalar $\dmiss$. Within an order-parameter description of online distillation, and exploiting closed-form (arcsine-type) expressions for all errors under error-function activations, we prove that the learning dynamics and the distillation error $\Ets$ are exactly invariant to $\dmiss$, whereas the true error $\Etzs$ and the gap $\Delta=\Etzs-\Ets$ are strictly increasing in $\dmiss$, with a rate that is amplified linearly by the complexity $M_0$ of the true teacher. Numerical phase diagrams over the plane spanned by true-teacher complexity and student capacity confirm the predicted deformation: the contours of $\Ets$ do not move while the landscape of $\Etzs$ rises systematically, and a teacher-miss regime, where mimicry succeeds but the task fails, expands with $\dmiss$. The results give a quantitative warning against evaluating distillation solely through teacher-mimicry metrics and identify the gap $\Delta$ as a minimal diagnostic for distinguishing teacher-miss from capacity-limited failure.
\end{abstract}

\section{Introduction}
\label{sec:intro}
The high accuracy of deep learning models is achieved by stacking many layers and using a large number of parameters (degrees of freedom). As a result, training and running deep learning models require enormous computational resources, making their practical deployment difficult.
To address this issue, knowledge distillation has been proposed as a method that reduces computation time while retaining knowledge equivalent to that of a large model. Knowledge distillation~\citep{hinton2015distilling} is a basic technique for transferring the input--output relationship of a large model (the teacher) to a small model (the student), reducing inference cost while maintaining performance. Although it has attracted attention as a key technology for deploying AI systems, theoretical understanding of which large, complex models can be approximated by which learning models has not kept pace, and the scope of applicability of distillation remains unclear.

The learning objective of distillation is typically that the student reproduce the teacher's outputs, and its achievement is measured by the discrepancy between teacher and student, or by the loss against the teacher's soft targets. We collectively refer to metrics of this type as \emph{teacher-output approximation}. What we are ultimately interested in, however, is the student's error on the \emph{true task}, which is to minimize the gap between the true teacher, the generative model, or the unknown target function. We refer to metrics in this sense as the \emph{true error}. Teacher-output approximation and true error tend to agree when the teacher adequately represents the true task, but they are not identical in general. Teacher-output approximation only measures the accuracy of imitating the teacher; when the teacher itself misses part of the true task, there is no guarantee that the true error improves even if the student imitates the teacher perfectly.

The more widely distillation is used in practice, the more this duality becomes an operational risk: an evaluation design that declares success whenever the distillation loss decreases can conflate ``success of imitation'' (teacher-output approximation) with ``success on the task'' (true error). Understanding distillation therefore requires a framework that theoretically separates the two and makes visible the conditions under which they diverge.

\par
The central phenomenon addressed in this paper is the following:
\emph{even though distillation reduces the teacher--student discrepancy (the imitation error, i.e., teacher-output approximation), the error between the true teacher and the student (the true error) does not decrease, or stalls.}

This phenomenon comes in two broad types. The first is \textbf{teacher-miss} (teacher misspecification). When the teacher fails to capture structure possessed by the true teacher, the student's imitation transfers the teacher \emph{including} what it misses: the imitation error decreases, but the error with respect to the task (the true error) does not. The second is \textbf{capacity-limited} failure. Even when the teacher represents the true task well, if the student's capacity (degrees of freedom) is insufficient, the imitation error itself does not decrease sufficiently, and naturally the true error does not improve either.

Importantly, these two failure modes are difficult to distinguish from the same observation (e.g., convergence of the distillation loss). Teacher-miss is a failure in which successful imitation does not translate into task success, while capacity-limited failure is one in which the imitation itself is insufficient. Without this distinction, the appropriate remedy, that is, improving the teacher, enlarging the student, or changing the distillation target, cannot be determined.

We perform this separation by simultaneously tracking the errors among the three parties (the true teacher $t_0$, the teacher $t$, and the student $s$),
\[
\Ets=\frac12\mathbb{E}\bigl[(t-s)^2\bigr],\qquad
\Etzs=\frac12\mathbb{E}\bigl[(t_0-s)^2\bigr],
\]
and in particular using their difference,
\[
\Delta \equiv \Etzs-\Ets,
\]
as a diagnostic that renders teacher-miss visible. A large $\Delta$ indicates a region where imitating the teacher continues to reduce $\Ets$ while the true objective $\Etzs$ does not improve.

The contributions of this paper are as follows. First, we present a framework in which teacher-miss is induced with a minimal modification and its onset is visualized as a phase diagram. Concretely, the true teacher possesses, in addition to latent factors $(U_1,U_2)$, a shared latent factor $U_3$ that the teacher cannot observe, and the strength of this factor is controlled by a single scalar $\dmiss\ge0$. Here $\dmiss=0$ corresponds to no mismatch, and $\dmiss>0$ induces teacher-miss. This design keeps the distillation dynamics (updates targeting the teacher $t$) intact while systematically changing only the evaluation on the true-error side.

Second, we treat distillation as a set of ordinary differential equations (ODEs) for order parameters and analyze the resulting global phase structure. Under a learning rate $\eta$ and continuous time $\alpha$, teacher--student correlations and intra-student correlations (in the symmetric reduction, $r$, $q_d$, $q_o$) follow closed ODEs, in the tradition of the statistical mechanics of online learning~\citep{saad1995exact,biehl1995learning}. This allows systematic statements over the parameter space, where teacher-miss occurs and where capacity limitation dominates, without relying on individual finite-size experiments. Moreover, choosing the error function as the activation renders the distillation error and the true error in closed (arcsine-type) form in the order parameters, and we prove that \emph{the learning dynamics and $\Ets$ are exactly invariant to $\dmiss$, while $\Etzs$ and $\Delta$ are strictly increasing in $\dmiss$, with a rate amplified in proportion to the true-teacher complexity $M_0$} (Section~\ref{subsec:closedform}). Our three-party setting can be viewed as a nonlinear, structured-mismatch counterpart of the linear three-party models studied in the statistical mechanics of learning, such as a student supervised by a moving teacher orbiting a true teacher~\citep{miyoshi2006moving} or by ensembles of imperfect teachers~\citep{miyoshi2006ensemble}.

Third, we present the phase structure through two visualizations: (i) a comparison of the landscapes and contours of $\Ets$ and $\Etzs$ over the $(M_0,\Ktot)$ plane, and (ii) heat maps of the gap $\Delta=\Etzs-\Ets$. Experimentally, increasing $\dmiss$ leaves the contours of $\Ets$ unchanged while the landscape of $\Etzs$ rises systematically and the region of large $\Delta$ expands. This observation shows that the region where improving teacher-output approximation does not guarantee improvement on the true task appears as a phase diagram controlled by the mismatch strength $\dmiss$.

\section{Related Work}
\label{sec:related}

\subsection{Knowledge distillation: foundations and practical issues}
The idea of distillation is old, but its importance was firmly established by \citet{hinton2015distilling}. Early work showing that shallow models can be trained from deep ones includes \citet{ba2014deep}, and methods that transfer knowledge using intermediate representations as hints~\citep{DBLP:journals/corr/RomeroBKCGB14} are important in the context of imposing structural constraints beyond outputs. For a comprehensive survey of knowledge distillation, see \citet{gou2021knowledge}.

It has been observed empirically that distillation fails when the capacity gap between teacher and student is too large, which supports the capacity-limited regime discussed in this paper~\citep{cho2019efficacy}. Teacher assistants have been proposed as a remedy, with an accompanying discussion of the mismatch between teacher complexity and student ability~\citep{mirzadeh2020improved}. Critical examinations showing that distillation does not always yield good generalization~\citep{stanton2021does} provide empirical grounding for our claim that $\Ets$ can decrease while $\Etzs$ does not.

\subsection{Statistical mechanics of online learning}
The statistical-mechanical characterization of learning dynamics is an established stream in the theory of machine learning~\citep{watkin1993statistical,engel2001statistical,goldt2019generalisationdynamicsonlinelearning}. The landmark description of online learning dynamics of multilayer networks by ODEs, which forms the mathematical basis of this paper, is \citet{saad1995exact}; the companion paper analyzes symmetry breaking and the specialization transition~\citep{saad1995on}. General treatments of online gradient learning and its deterministic ODE description are given by \citet{biehl1995learning}, and detailed analyses of the plateau structure and its dependence on the learning rule and metric are given by \citet{rattray1999analysis}.

The framework has been extended well beyond the single teacher--single student configuration. \citet{hara2005ensemble} analyzed online ensemble learning in which linear perceptron students, possibly with teacher- and student-side noise, learn from a common linear teacher, and \citet{miyoshi2005analysis} treated ensembles of nonlinear simple perceptrons within the same online-learning framework. Mutual learning among many students, organized with or without an explicit teacher, has also been analyzed~\citep{hara2009mutual}. These works demonstrate that macroscopic order parameters, namely, teacher--student overlaps and student--student overlaps, suffice to describe the generalization behavior of multi-network systems, which is precisely the reduction we adopt for distillation. Modern updates of this program include the high-dimensional dynamics of generalization error~\citep{advani2020high}, the SGD dynamics of over-parameterized two-layer networks in the teacher--student setup~\citep{goldt2019dynamics}, and computational-to-statistical gap analyses for committee machines~\citep{aubin2018committee}.

\subsection{Model mismatch and three-party teacher--student models}
\label{subsec:related_mismatch}
The setting in which the model class of the learner (or, in our case, of the teacher) cannot represent the data-generating rule has a long history as the study of \emph{unrealizable rules}. Differences between the learning curves of realizable and unrealizable rules were analyzed by \citet{seung1992statistical}; see also \citet{engel2001statistical}. Our study can be positioned as an extension of this line to the three-party relation of true teacher, teacher, and student. Model mismatch has also been studied from the input-distribution side as covariate shift, where training and evaluation distributions differ~\citep{shimodaira2000improving,kimura2022information}; in the present paper the mismatch resides instead in the supervision signal, i.e., in the function class of the teacher that generates the training targets.

Closest to our setting are the three-party models analyzed in the statistical mechanics of online learning. \citet{miyoshi2006moving} treated a system composed of a true teacher, a \emph{moving} teacher orbiting it, and a student, all linear perceptrons with output noise, and computed the three generalization errors: true teacher versus teacher, true teacher versus student, and teacher versus student, which correspond exactly to our $\Etzt$, $\Etzs$, and $\Ets$. \citet{miyoshi2006ensemble} analyzed a student learning from ensembles of imperfect (quasi-optimal) teachers scattered around a true teacher, and \citet{hara2007latent} analyzed mutual learning among students whose generalization ability is measured with respect to a \emph{latent} teacher that takes no part in the learning. A remarkable finding of this line is that a student can generalize better with respect to the true teacher than the imperfect teacher it observes, provided the learning rate is chosen appropriately. Our work differs in two respects. First, the models are nonlinear soft committee machines, so the interplay between capacity ($\Ktot$) and complexity ($M_0$) is nontrivial. Second, and more importantly, the teacher's deficiency is \emph{structural} rather than stochastic: the missing shared factor $U_3$ is absent from the teacher's function class and never enters the supervision signal, so it cannot be averaged out or recovered by any amount of imitation. This is the mechanism that produces the strict, $M_0$-amplified monotonicity of the gap established in Section~\ref{subsec:closedform}.

\subsection{Theoretical analyses of distillation}
Mathematical analyses of distillation outside statistical mechanics have also been developed. \citet{phuong2019towards} gave a theoretical convergence analysis of distillation in the linear case, providing conditions for when distillation succeeds. Distillation has been analyzed from the viewpoint of bias--variance decompositions~\citep{menon2021statistical}, and self-distillation has been shown to act as amplified regularization in Hilbert space~\citep{mobahi2020self}. Risk bounds for distillation in wide neural networks, including the effect of an \emph{imperfect} teacher, are given by \citet{ji2020knowledge}. Compared with these, which are mostly linear or kernel analyses, we treat nonlinear dynamics of soft committee machines; and compared with the mainstream neural-tangent-kernel analyses~\citep{jacot2018neural}, our approach is of the mean-field/thermodynamic-limit type.

Other related threads concern the divergence of evaluation metrics and teacher quality. Large-scale experiments relating the quality of teacher training to distillation performance are reported by \citet{beyer2022knowledge} (``patience and consistency''), and the relation between label smoothing and distillation is discussed by \citet{muller2019does}. Distillation~\citep{hinton2015distilling} and privileged information~\citep{JMLR:v16:vapnik15b} are two techniques that enable machines to learn from other machines, and a unifying view is given by \citet{lopez2015unifying}. Statistical learning theory also provides transfer-risk bounds for distillation~\citep{ji2020knowledge}, and the learning dynamics of over-parameterized networks~\citep{oymak2019overparameterized} are an interesting counterpart to our capacity-limited regime.

Existing work discusses the convergence of $\Ets$ (teacher--student) or generalization bounds, but analytic treatments of the three-way error propagation when the teacher itself deviates from the truth ($\Etzt>0$) are scarce outside the linear models of Section~\ref{subsec:related_mismatch}. Statistical-mechanical methods are well suited to such unrealizable-rule analyses. The novelty of this study is to extend the soft-committee-machine framework of Saad and Solla to the teacher-miss problem in distillation, with a structured latent-factor mismatch controlled by a single scalar, closed-form error evaluation, and visualization as phase diagrams.

\section{Setting and Error Metrics}
\label{sec:setup}

In this section we define the three parties: the true teacher (generative model) $t_0$, the teacher $t$ used for distillation, and the student $s$. Then, we introduce the distillation error, the true error, and their gap as metrics. As an auxiliary device for phase diagrams, we also define good/teacher-miss/capacity-limited regimes.

The input is $\boldsymbol{\xi}\in\mathbb{R}^N$ with $\boldsymbol{\xi}\sim\mathcal{N}(\bm{0},I_N)$, and all models share the activation function
\begin{equation}
g(u)=\mathrm{erf}\!\left(\frac{u}{\sqrt{2}}\right),\qquad
g'(u)=\sqrt{\frac{2}{\pi}}\,e^{-u^2/2}.
\label{eq:activation}
\end{equation}

\subsection{True teacher: latent factors $(U_1,U_2,U_3)$ and complexity $M_0$}
\label{subsec:true_teacher}
The output of the true teacher (generative model) is a soft committee machine (SCM) with $M_0$ hidden units,
\begin{equation}
t_0(\boldsymbol{\xi})
=\frac{1}{\sqrt{M_0}}\sum_{a=1}^{M_0} g\!\left(y^{(0)}_a(\boldsymbol{\xi})\right),
\label{eq:true_teacher_out}
\end{equation}
where the complexity of the true teacher is controlled by $M_0$, which serves as the vertical axis of the phase diagrams.

The units of the true teacher are divided into two groups by the latent factors $(U_1,U_2)$. That is, with $M_0=M_{0,1}+M_{0,2}$, the preactivations are
\begin{align}
y^{(0)}_a &= \sqrt{t_d}\,U_1 + \sqrt{t_{o,1}}\,\varepsilon^{(0)}_a + \sqrt{\dmiss}\,U_3
\qquad (a=1,\dots,M_{0,1}),
\label{eq:true_group1}\\
y^{(0)}_a &= \sqrt{t_d}\,U_2 + \sqrt{t_{o,2}}\,\varepsilon^{(0)}_a + \sqrt{\dmiss}\,U_3
\qquad (a=M_{0,1}+1,\dots,M_{0,1}+M_{0,2}),
\label{eq:true_group2}
\end{align}
where $U_1,U_2,U_3\sim\mathcal{N}(0,1)$ are mutually independent, and $\varepsilon^{(0)}_a\sim\mathcal{N}(0,1)$ are independent unit-specific noises. The parameter $t_d>0$ is the strength of the shared within-group component, and $t_{o,1},t_{o,2}>0$ are the strengths of the group-wise independent noise. In what follows we abbreviate the preactivation variances of group $g\in\{1,2\}$ as
\begin{equation}
\sigma_g^2:=t_d+t_{o,g},\qquad
\sigma_{0,g}^2:=\sigma_g^2+\dmiss ,
\label{eq:sigma_def}
\end{equation}
where the former corresponds to the teacher defined below and the latter to the true teacher.

The scalar $\dmiss\ge0$ is the mismatch strength: for $\dmiss=0$ the true teacher is described by the latent factors $(U_1,U_2)$ alone, while for $\dmiss>0$ only the true teacher contains the additional shared latent factor $U_3$.

\begin{remark}[Realizability of the latent factors]
\label{rem:realizability}
The latent factors and unit-specific noises can be realized as linear projections of the input. Taking mutually orthonormal vectors $\bm{u}_1,\bm{u}_2,\bm{u}_3,\bm{e}^{(0)}_1,\dots\in\mathbb{R}^N$ and setting $U_g=\bm{u}_g^\top\boldsymbol{\xi}$ and $\varepsilon^{(0)}_a=(\bm{e}^{(0)}_a)^\top\boldsymbol{\xi}$, these are independent standard normal variables under $\boldsymbol{\xi}\sim\mathcal{N}(\bm 0,I_N)$, and \eqref{eq:true_group1}--\eqref{eq:true_group2} coincide with the preactivations $y^{(0)}_a=(\bm{B}^{(0)}_a)^\top\boldsymbol{\xi}$ of an ordinary SCM with weight vectors
$\bm{B}^{(0)}_a=\sqrt{t_d}\,\bm{u}_g+\sqrt{t_{o,g}}\,\bm{e}^{(0)}_a+\sqrt{\dmiss}\,\bm{u}_3$.
Consequently, all preactivations appearing below (true teacher, teacher, and student) are jointly Gaussian, and their joint distribution is determined by second moments (order parameters) alone.
\end{remark}

\subsection{Teacher: a mirror model lacking $U_3$}
\label{subsec:teacher}
The teacher $t$ used for distillation is defined as the ``mirror'' of the true teacher with only the additional factor $U_3$ removed. That is, the teacher is an SCM with $M$ units,
\begin{equation}
t(\boldsymbol{\xi})
=\frac{1}{\sqrt{M}}\sum_{m=1}^{M} g\!\left(y_m(\boldsymbol{\xi})\right),
\label{eq:teacher_out}
\end{equation}
with preactivations
\begin{align}
y_m &= \sqrt{t_d}\,U_1 + \sqrt{t_{o,1}}\,\varepsilon_m\qquad (m=1,\dots,M_1),
\label{eq:teacher_group1}\\
y_m &= \sqrt{t_d}\,U_2 + \sqrt{t_{o,2}}\,\varepsilon_m\qquad (m=M_1+1,\dots,M_1+M_2),
\label{eq:teacher_group2}
\end{align}
where the latent factors $(U_1,U_2)$ are \textbf{shared} with the true teacher and the unit-specific noises $\varepsilon_m\sim\mathcal{N}(0,1)$ are drawn independently of $\varepsilon^{(0)}_a$.

In the experiments of this paper we match the structures of the teacher and the true teacher,
\begin{equation}
M=M_0,\qquad M_g=M_{0,g}\quad(g=1,2).
\label{eq:mirror}
\end{equation}
Under this mirror construction, the only difference between teacher and true teacher is the presence or absence of the $U_3$ component, so the effect of teacher-miss is controlled purely by the single scalar $\dmiss$. The generalization in which the teacher capacity $M$ is decoupled from $M_0$ is left as an extension discussed in Section~\ref{subsec:limits_extensions}. Since the teacher does not contain $U_3$ in \eqref{eq:teacher_group1}--\eqref{eq:teacher_group2}, for $\dmiss>0$ a situation systematically arises in which imitating the teacher cannot recover the true factor.

\subsection{Student: capacity ($\Ktot$) and group split}
\label{subsec:student}
The student is an SCM with $\Ktot$ units,
\begin{equation}
s(\boldsymbol{\xi})
=\frac{1}{\sqrt{\Ktot}}\sum_{j=1}^{\Ktot} g\!\left(x_j(\boldsymbol{\xi})\right),
\qquad
x_j(\boldsymbol{\xi})=\boldsymbol{J}_j^\top\boldsymbol{\xi},
\label{eq:student_out}
\end{equation}
where $\boldsymbol{J}_j\in\mathbb{R}^N$ are the learned weight vectors. The horizontal axis of the phase diagrams is $\Ktot$, the capacity of the student.

In the numerical implementation, matching the two-group structure of the teacher, the student is also split into two groups ($\Ktot=K_1+K_2$ with $K_1=\lceil \Ktot/2\rceil$ and $K_2=\lfloor \Ktot/2\rfloor$), with group~1 aligned mainly with the $U_1$ side and group~2 with the $U_2$ side. Under this symmetry we track only representative order parameters (group-wise cross- and self-correlations); see Section~\ref{sec:ode}.

\subsection{Error metrics: $\Ets$, $\Etzs$, and the gap $\Delta$}
\label{subsec:errors}
What distillation minimizes most directly is the output discrepancy between teacher and student. We define the distillation error as
\begin{equation}
\Ets
=\frac12\,\mathbb{E}\!\left[\bigl(t(\boldsymbol{\xi})-s(\boldsymbol{\xi})\bigr)^2\right].
\label{eq:Ets}
\end{equation}
The ultimate objective, in contrast, is the generalization performance of the student with respect to the true teacher (the generative model); the true error is
\begin{equation}
\Etzs
=\frac12\,\mathbb{E}\!\left[\bigl(t_0(\boldsymbol{\xi})-s(\boldsymbol{\xi})\bigr)^2\right].
\label{eq:Et0s}
\end{equation}
Here the expectation $\mathbb{E}[\cdot]$ is over the input $\boldsymbol{\xi}$, which by Remark~\ref{rem:realizability} is equivalent to averaging over the latent factors and noises ($U_1,U_2,U_3,\varepsilon,\varepsilon^{(0)}$). As a measure of teacher quality we also use
\begin{equation}
\Etzt=\frac12\,\mathbb{E}\!\left[\bigl(t_0(\boldsymbol{\xi})-t(\boldsymbol{\xi})\bigr)^2\right].
\label{eq:Et0t}
\end{equation}

The diagnostic quantity of interest in this paper is the gap measuring the discrepancy between successful teacher-output approximation and success on the true task,
\begin{equation}
\Delta
:=\Etzs-\Ets .
\label{eq:gap_def}
\end{equation}
A region with large $\Delta$ corresponds to the canonical picture of teacher-miss: imitation of the teacher (decrease of $\Ets$) is achieved, while the true objective (decrease of $\Etzs$) is not. In the phase diagrams we visualize $\Ets$, $\Etzs$, and $\Delta$ over the $(M_0,\Ktot)$ plane and compare the deformation of the landscapes under the mismatch strength $\dmiss$.

The gap admits the following identity, which decomposes it into teacher quality and the correlation between the teacher's miss and the imitation residual.

\begin{proposition}[Gap decomposition]
\label{prop:gap_decomp}
For any $t_0$, $t$, $s$,
\begin{equation}
\Delta=\Etzs-\Ets
=\Etzt+\mathbb{E}\bigl[(t_0-t)(t-s)\bigr].
\label{eq:gap_decomp}
\end{equation}
\end{proposition}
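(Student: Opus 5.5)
The identity is purely algebraic and holds pointwise before taking expectations, so the plan is to expand a square and then invoke linearity of $\mathbb{E}$. The only analytic condition needed is that $t_0$, $t$, $s$ be square-integrable. Here this is automatic, since each output is a normalized sum of $\mathrm{erf}$ terms and hence bounded ($|g|\le 1$). Consequently every expectation below is finite and may be split termwise.

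\textbf{Step 1: expand the true error.} Write $t_0-s=(t_0-t)+(t-s)$ for every input $\boldsymbol{\xi}$. Squaring gives
\begin{equation}
(t_0-s)^2=(t_0-t)^2+2(t_0-t)(t-s)+(t-s)^2 .
\end{equation}

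\textbf{Step 2: take expectations.} Multiply by $\tfrac12$ and apply $\mathbb{E}$ over $\boldsymbol{\xi}$. By the definitions \eqref{eq:Et0s}, \eqref{eq:Et0t} and \eqref{eq:Ets},
\begin{equation}
\Etzs=\Etzt+\mathbb{E}\bigl[(t_0-t)(t-s)\bigr]+\Ets .
\end{equation}

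\textbf{Step 3: conclude.} Subtract $\Ets$ from both sides and use the definition \eqref{eq:gap_def} of $\Delta$. This yields \eqref{eq:gap_decomp}.

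There is no genuine obstacle. The only point deserving care is justifying that the cross term $\mathbb{E}[(t_0-t)(t-s)]$ is finite, so that linearity applies. This follows either from the boundedness of $g$ or, more generally for arbitrary $t_0,t,s$ in $L^2$, from the Cauchy--Schwarz inequality. I would state this assumption explicitly, since the proposition is claimed ``for any $t_0$, $t$, $s$'': it holds whenever all three errors are finite, and it does not use the Gaussian structure or the specific form of the models.
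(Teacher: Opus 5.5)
Your proof is correct and is the same argument as the paper's: write $t_0-s=(t_0-t)+(t-s)$, expand the square, take expectations and multiply by $\tfrac12$. Your remark on square-integrability is a small refinement the paper leaves implicit; here it is automatic because $|g|\le 1$.
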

\begin{proof}
Take expectations of $(t_0-s)^2-(t-s)^2=\bigl((t_0-t)+(t-s)\bigr)^2-(t-s)^2=(t_0-t)^2+2(t_0-t)(t-s)$ and multiply by $1/2$.
\end{proof}

That is, the gap is the teacher quality $\Etzt$ (which grows with $\dmiss$) corrected by the correlation between the teacher's miss $(t_0-t)$ and the imitation residual $(t-s)$. In particular, when imitation is perfect ($s=t$), $\Delta=\Etzt$: the gap that remains after completing imitation is exactly the teacher quality.

\subsection{Regimes: good / teacher-miss / capacity-limited}
\label{subsec:regime}
As an auxiliary device for the phase diagrams, we classify regions by thresholding the errors. We fix two thresholds $\tau_{ts}>0$ and $\tau_{t_0s}>0$ and define three regimes:
\begin{itemize}
\item \textbf{good}: $\Ets\le\tau_{ts}$ and $\Etzs\le\tau_{t_0s}$. Both the distillation error and the true error are small.
\item \textbf{teacher-miss}: $\Ets\le\tau_{ts}$ and $\Etzs>\tau_{t_0s}$. Imitation succeeds but the true error is large.
\item \textbf{capacity-limited}: $\Ets>\tau_{ts}$ and $\Etzs>\tau_{t_0s}$. The imitation error does not decrease (e.g., due to insufficient student capacity) and the true error is also large.
\end{itemize}
(The remaining region, $\Ets>\tau_{ts}$ and $\Etzs\le\tau_{t_0s}$, is treated as ``other'' for convenience.)

Since threshold classifications are definition-dependent, the main conclusions of this paper are stated in terms of the continuous quantities \eqref{eq:Ets}--\eqref{eq:gap_def} ($\Ets$, $\Etzs$, $\Delta$). The regime maps serve only to aid the visualization of boundaries.

\section{Order-Parameter (ODE) Dynamics}
\label{sec:ode}

In this section we describe how, in the high-dimensional limit, the dynamics of distillation (learning that targets the teacher $t$) is described by ordinary differential equations that close in a small number of order parameters. Our implementation uses a symmetric reduction (exchangeable units) in which the student is split into two groups matching the two teacher groups (latent factors $U_1,U_2$). As a result, the order parameters to be updated reduce to three per group: the cross-correlation $r_g$, the diagonal self-correlation $q_{d,g}$, and the off-diagonal self-correlation $q_{o,g}$.

\subsection{Choice of order parameters (symmetric reduction)}
\label{subsec:orderparams}

As defined in Section~\ref{subsec:teacher}, the teacher preactivation in group $g\in\{1,2\}$ is
\begin{equation}
y=\sqrt{t_d}\,U_g+\sqrt{t_{o,g}}\,\varepsilon,
\qquad U_g,\varepsilon\sim\mathcal{N}(0,1)\ \text{independent},
\label{eq:teacher_preact}
\end{equation}
with $U_g$ shared within the group.

For the student, under the assumption that the $K_g$ units in group $g$ are exchangeable (identically distributed with identical correlations), the covariance structure of any unit preactivation $x_i^{(g)}$ in group $g$ is characterized by
\begin{equation}
\mathrm{Var}(x_i^{(g)})=q_{d,g},\qquad
\mathrm{Cov}(x_i^{(g)},x_j^{(g)})=q_{o,g}\ (i\neq j),\qquad
\mathrm{Cov}(y,x_i^{(g)})=r_g ,
\label{eq:cov_structure}
\end{equation}
where $r_g$ is the cross-correlation (alignment) between the teacher (group $g$) and the student (group $g$).

\begin{remark}[Closure ansatz]
\label{rem:closure}
The ODEs of this paper are a reduced dynamics restricted to the order parameters $(r_g,q_{d,g},q_{o,g})$ of each group. That is, (i) overlaps between the student and the teacher's unit-specific noises, $\mathrm{Cov}(x_i^{(g)},\varepsilon_m)$, and (ii) cross-group overlaps, $\mathrm{Cov}(x_i^{(g)},U_{g'})$ and $\mathrm{Cov}(x_i^{(g)},x_j^{(g')})$ for $g\neq g'$, are set identically to zero. In the full Saad--Solla-type system these overlaps can also evolve in time, so our ODE is a minimal model obtained by projecting onto the symmetric block structure. Importantly, the main conclusion that $\dmiss$ does not enter the dynamics (Section~\ref{subsec:mismatch_insertion}) does not depend on this reduction: even in the full order-parameter system, the error signal $A=t-s$ contains no $U_3$, so $\dmiss$ never appears in the learning dynamics.
\end{remark}

In the numerical implementation, a Gaussian representation satisfying the covariance structure \eqref{eq:cov_structure} generates the group-$g$ preactivations as
\begin{equation}
x_i^{(g)}
=a_g\,U_g
+\sqrt{q_{o,g}-a_g^2}\,W_g
+\sqrt{q_{d,g}-q_{o,g}}\,Z_i^{(g)},
\qquad i=1,\dots,K_g ,
\label{eq:student_gaussian_rep}
\end{equation}
where $U_g$ is the latent factor shared with the teacher, $W_g$ is a Gaussian factor shared within the group, and $Z_i^{(g)}$ are unit-specific Gaussian factors, with $W_g,Z_i^{(g)}$ independent standard normal. This representation satisfies
\begin{align*}
&\mathrm{Var}(x_i^{(g)})=a_g^2+(q_{o,g}-a_g^2)+(q_{d,g}-q_{o,g})=q_{d,g},
\\
&
\mathrm{Cov}(x_i^{(g)},x_j^{(g)})=a_g^2+(q_{o,g}-a_g^2)=q_{o,g}.
\end{align*}
Moreover, since the teacher preactivation \eqref{eq:teacher_preact} correlates with $x_i^{(g)}$ only through $U_g$,
\begin{equation}
r_g=\mathrm{Cov}(y,x_i^{(g)})=\sqrt{t_d}\,a_g,
\qquad\text{i.e.}\qquad
a_g=\frac{r_g}{\sqrt{t_d}},
\label{eq:a_reparam}
\end{equation}
which provides the reparameterization used below. For the representation \eqref{eq:student_gaussian_rep} to be valid we need
\begin{equation}
0\leq a_g^2=\frac{r_g^2}{t_d}\leq q_{o,g}\leq q_{d,g}.
\label{eq:validity}
\end{equation}
The left inequality $r_g^2/t_d\leq q_{o,g}$ expresses the geometric fact that alignment with the shared factor $U_g$ necessarily generates within-group correlation among student units ($q_{o,g}\ge a_g^2$); in the numerical integration this constraint is imposed in the direction $q_{o,g}\leftarrow\max(q_{o,g},r_g^2/t_d)$ (Section~\ref{subsec:impl_crn}). For $K_g=1$ the parameter $q_{o,g}$ is undefined and the condition becomes $r_g^2/t_d\leq q_{d,g}$.

The teacher and student outputs, including normalization, are
\begin{equation}
t=\frac{1}{\sqrt{M}}\sum_{m=1}^{M} g(y_m),\qquad
s=\frac{1}{\sqrt{\Ktot}}\sum_{j=1}^{\Ktot} g(x_j).
\label{eq:teacher_student_outputs}
\end{equation}
In what follows we write the error signal appearing in the distillation loss as
\begin{equation}
A \equiv t-s
\label{eq:A_def}
\end{equation}
(to be distinguished later from $B\equiv t_0-s$, which appears in the true error).

\subsection{ODE update equations}
\label{subsec:ode_update}
In distillation the teacher is fixed and only the student is updated online: at each step a new input $\boldsymbol{\xi}$ is observed and a gradient step on the instantaneous loss $\frac12 A^2$ is taken,
\begin{equation}
\boldsymbol{J}_j\leftarrow\boldsymbol{J}_j+\frac{\eta}{N}\,A\,g'(x_j)\,\boldsymbol{\xi},
\label{eq:sgd_rule}
\end{equation}
where $\eta$ is an effective learning rate absorbing the factor stemming from the output normalization $1/\sqrt{\Ktot}$ (if the plain gradient $-\partial_{\boldsymbol{J}_j}\frac12A^2=\frac{1}{\sqrt{\Ktot}}A g'(x_j)\boldsymbol{\xi}$ is used with learning rate $\eta_0$, then $\eta=\eta_0/\sqrt{\Ktot}$). In the high-dimensional limit $N\to\infty$, averaging over one-step updates makes the order parameters evolve smoothly in the continuous time $\alpha$ ($\alpha=\mu/N$, with $\mu$ the step count) and yields closed ODEs. Following our implementation, we adopt the minimal ODE that updates only $(r_g,q_{d,g},q_{o,g})$ per group (Remark~\ref{rem:closure}).

Let $x_1^{(g)}$ be a representative unit of group $g$ and $x_2^{(g)}$ another unit of the same group. Then, following the standard SCM derivation~\citep{saad1995on,biehl1995learning}, the ODEs induced by the update rule \eqref{eq:sgd_rule} are
\begin{align}
\frac{dr_g}{d\alpha}
=&\eta\,\Big\langle A\,g'\!\left(x_1^{(g)}\right)\,y^{(g)}\Big\rangle,
\label{eq:ode_r}\\
\frac{dq_{d,g}}{d\alpha}
=&2\eta\,\Big\langle A\,g'\!\left(x_1^{(g)}\right)\,x_1^{(g)}\Big\rangle
+\eta^2\,\Big\langle A^2\,\bigl(g'\!\left(x_1^{(g)}\right)\bigr)^2\Big\rangle,
\label{eq:ode_qd}\\
\frac{dq_{o,g}}{d\alpha}
=&\eta\,\Big\langle A\Big(g'\!\left(x_1^{(g)}\right)x_2^{(g)}+g'\!\left(x_2^{(g)}\right)x_1^{(g)}\Big)\Big\rangle\\
&+\eta^2\,\Big\langle A^2\,g'\!\left(x_1^{(g)}\right)g'\!\left(x_2^{(g)}\right)\Big\rangle
\qquad (K_g\ge 2),
\label{eq:ode_qo}
\end{align}
where $y^{(g)}$ is the preactivation of a representative teacher unit of group $g$, and the expectation $\langle\cdot\rangle$ is with respect to the Gaussian covariance structure \eqref{eq:cov_structure} of Section~\ref{subsec:orderparams} (and the representation \eqref{eq:student_gaussian_rep}).

The numerical integration is performed by the Euler method with learning rate $\eta$ and step size $d\alpha$, with updates
\begin{align}
r_g &\leftarrow r_g + d\alpha\cdot \eta\,\widehat{F}_{r_g},
\label{eq:euler_r}\\
q_{d,g} &\leftarrow q_{d,g} + d\alpha\cdot \Big(2\eta\,\widehat{F}_{1,g}+\eta^2\,\widehat{F}_{2,g}\Big),
\label{eq:euler_qd}\\
q_{o,g} &\leftarrow q_{o,g} + d\alpha\cdot \Big(\eta\,\widehat{G}_{1,g}+\eta^2\,\widehat{G}_{2,g}\Big)\qquad (K_g\ge 2),
\label{eq:euler_qo}
\end{align}
where $\widehat{F}_{r_g},\widehat{F}_{1,g},\widehat{F}_{2,g},\widehat{G}_{1,g},\widehat{G}_{2,g}$ are Monte Carlo estimators of
\[
\widehat{F}_{r_g}\approx\left\langle A\,g'(x_1^{(g)})\,y^{(g)}\right\rangle,\quad
\widehat{F}_{1,g}\approx\left\langle A\,g'(x_1^{(g)})\,x_1^{(g)}\right\rangle,\quad
\widehat{F}_{2,g}\approx\left\langle A^2\,(g'(x_1^{(g)}))^2\right\rangle,
\]
\[
\widehat{G}_{1,g}\approx\left\langle A\bigl(g'(x_1^{(g)})x_2^{(g)}+g'(x_2^{(g)})x_1^{(g)}\bigr)\right\rangle,\quad
\widehat{G}_{2,g}\approx\left\langle A^2\,g'(x_1^{(g)})g'(x_2^{(g)})\right\rangle,
\]
each estimated from $\nsamp$ samples per step; for $K_g=1$ the update \eqref{eq:euler_qo} is not used. The order parameters are clipped as needed to preserve the validity condition \eqref{eq:validity} (Section~\ref{subsec:impl_crn}).

\subsection{Mismatch $\dmiss$}
\label{subsec:mismatch_insertion}

A key design point of this paper is that the mismatch strength $\dmiss$ does not enter the distillation dynamics itself; it enters only the evaluation against the true teacher.

The true-teacher preactivation is generated by \eqref{eq:true_group1}--\eqref{eq:true_group2} of Section~\ref{subsec:true_teacher}, whose preactivation adds the shared factor $\sqrt{\dmiss}\,U_3$,
\begin{equation}
y^{(0)}=\sqrt{t_d}\,U_g+\sqrt{t_{o,g}}\,\varepsilon^{(0)}+\sqrt{\dmiss}\,U_3 ,
\label{eq:true_preact_mismatch}
\end{equation}
while the teacher $t$ used for distillation follows \eqref{eq:teacher_preact} and contains no $U_3$.

Since the update equations \eqref{eq:ode_r}--\eqref{eq:ode_qo} of distillation are driven solely by the error signal $A=t-s$, the parameter $\dmiss$ does not appear on the right-hand side of the ODEs. Consequently, under identical initial conditions and hyperparameters, the time evolution of $(r_g,q_{d,g},q_{o,g})$ hence the behavior of $\Ets$ is exactly invariant under changes of $\dmiss$ (Theorem~\ref{prop:invariance_monotone}(i)).

The true error, on the other hand, involves
\begin{equation}
B\equiv t_0-s ,
\label{eq:B_def}
\end{equation}
and $\dmiss$ acts directly through
\begin{equation}
\Etzs=\frac12\,\mathbb{E}\!\left[B^2\right].
\label{eq:Et0s_again}
\end{equation}
This separation, identical learning dynamics but systematically degrading true error, is the direct cause of the phase-diagram deformation observed in Section~\ref{sec:results}: the contours of $\Ets$ do not move, the landscape of $\Etzs$ rises, and $\Delta=\Etzs-\Ets$ grows. The next subsection quantifies this separation in closed form.

\subsection{Closed-form error evaluation and monotonicity in $\dmiss$}
\label{subsec:closedform}

For the activation \eqref{eq:activation}, Gaussian averages close in terms of the arcsine function~\citep{saad1995exact}.

\begin{lemma}[Arcsine identity]
\label{lem:arcsin}
Let $(u,v)$ be jointly Gaussian with mean zero, variances $V_u,V_v$, and covariance $C$. Then
\begin{equation}
\mathbb{E}\bigl[g(u)g(v)\bigr]
=\frac{2}{\pi}\arcsin\frac{C}{\sqrt{(1+V_u)(1+V_v)}}
=:\psi(C;V_u,V_v).
\label{eq:psi_def}
\end{equation}
\end{lemma}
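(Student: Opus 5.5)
We prove the arcsine identity by the classical route: reduce the computation to a derivative with respect to the covariance $C$, evaluate that derivative by a Gaussian integral, and integrate back from $C=0$. Write $F(C)=\mathbb{E}[g(u)g(v)]$ with $V_u,V_v$ held fixed, and regard $C$ as ranging over $|C|\le\sqrt{V_uV_v}$. At $C=0$ the variables are independent and $g$ is odd, so $F(0)=\mathbb{E}[g(u)]\,\mathbb{E}[g(v)]=0$. This matches $\psi(0;V_u,V_v)=0$. It therefore suffices to show
\[
F'(C)=\frac{2}{\pi}\frac{1}{\sqrt{(1+V_u)(1+V_v)-C^2}},
\]
since this is exactly $\partial_C\psi$.

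\textbf{Step 1 (Price's theorem).} For a centered bivariate Gaussian, $\partial_C\mathbb{E}[f(u)h(v)]=\mathbb{E}[f'(u)h'(v)]$. This follows from the heat-equation identity $\partial_C p=\partial_u\partial_v p$ for the Gaussian density together with two integrations by parts. The boundary terms vanish because $g$ is bounded and $g'$ decays like a Gaussian. Hence
\[
F'(C)=\mathbb{E}[g'(u)g'(v)]=\frac{2}{\pi}\,\mathbb{E}\bigl[e^{-(u^2+v^2)/2}\bigr].
\]

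\textbf{Step 2 (Gaussian integral).} For $\Sigma=\begin{pmatrix}V_u&C\\C&V_v\end{pmatrix}$, the standard formula $\mathbb{E}[e^{-\frac12 z^\top z}]=\det(I+\Sigma)^{-1/2}$ applies. Here $\det(I+\Sigma)=(1+V_u)(1+V_v)-C^2$, which gives the required derivative.

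\textbf{Step 3 (integration).} The function $\frac{2}{\pi}\arcsin\bigl(C/\sqrt{(1+V_u)(1+V_v)}\bigr)$ has the same derivative. It also vanishes at $C=0$. Its argument lies strictly inside $(-1,1)$, because $C^2\le V_uV_v<(1+V_u)(1+V_v)$. Hence the two functions agree on the whole admissible range, which proves the identity.

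The only step needing care is Step 1. To justify it, one must either check the integrability conditions for the integrations by parts, or avoid them altogether by writing $u=\sqrt{V_u}\,z_1$ and $v=(C/\sqrt{V_u})\,z_1+\sqrt{V_v-C^2/V_u}\,z_2$ and differentiating under the integral, which is legitimate since all derivatives involved are bounded. The degenerate case $V_u=0$ is trivial, because then $g(u)=0$ almost surely. Alternatively, Steps 1–2 can be bypassed: write $g(u)=\mathbb{P}(|w|\le u\mid u)-\mathbb{P}(|w|\le -u\mid u)$ with an auxiliary $w\sim\mathcal{N}(0,1)$, and reduce to the orthant probability of a Gaussian pair, whose arcsine formula is Sheppard's. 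I would use the derivative route as the main proof.
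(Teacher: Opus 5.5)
Your proposal is correct and follows the paper's proof essentially step for step: Price's theorem reduces $\partial_C\mathbb{E}[g(u)g(v)]$ to $\mathbb{E}[g'(u)g'(v)]=\frac{2}{\pi}\det(I+\Sigma)^{-1/2}$, and integrating from $C=0$, where independence and oddness of $g$ give zero, yields the arcsine formula. Your extra checks (the $\arcsin$ argument lying in $(-1,1)$, the degenerate case $V_u=0$) go beyond the paper's sketch; one small correction is that the reparametrization $u=\sqrt{V_u}\,z_1$ does not avoid integration by parts, because after differentiating in $C$ you still need Gaussian integration by parts (Stein's lemma) in $z_1,z_2$ to reach $\mathbb{E}[g'(u)g'(v)]$.
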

\begin{proof}
From $g'(u)=\sqrt{2/\pi}\,e^{-u^2/2}$,
\begin{align*}
\mathbb{E}[g'(u)g'(v)]&=\frac{2}{\pi}\mathbb{E}\bigl[e^{-(u^2+v^2)/2}\bigr]
=\frac{2}{\pi}\det(I+\Sigma)^{-1/2}\\
&=\frac{2}{\pi}\bigl((1+V_u)(1+V_v)-C^2\bigr)^{-1/2},
\end{align*}
where $\Sigma$ is the covariance matrix of $(u,v)$. By Price's theorem~\citep{price1958useful}, $\partial_C\,\mathbb{E}[g(u)g(v)]=\mathbb{E}[g'(u)g'(v)]$; noting that $\mathbb{E}[g(u)g(v)]=\mathbb{E}[g(u)]\mathbb{E}[g(v)]=0$ at $C=0$ and integrating in $C$ yields \eqref{eq:psi_def}.
\end{proof}

From Lemma~\ref{lem:arcsin} and the covariance structure of Section~\ref{sec:setup} (counting shared factors within and across groups), all errors admit closed forms in the order parameters.

\begin{proposition}[Closed-form errors]
\label{prop:closedform}
Under the mirror construction \eqref{eq:mirror}, the second moments are
\begin{align}
\mathbb{E}[t^2]&=\frac{1}{M_0}\sum_{g=1}^{2}\Bigl[M_{0,g}\,\psi(\sigma_g^2;\sigma_g^2,\sigma_g^2)
+M_{0,g}(M_{0,g}-1)\,\psi(t_d;\sigma_g^2,\sigma_g^2)\Bigr],
\label{eq:Et2}\\
\mathbb{E}[t_0^2]&=\frac{1}{M_0}\sum_{g=1}^{2}\Bigl[M_{0,g}\,\psi(\sigma_{0,g}^2;\sigma_{0,g}^2,\sigma_{0,g}^2)\nonumber\\
&\hspace{5.2em}+M_{0,g}(M_{0,g}-1)\,\psi(t_d+\dmiss;\sigma_{0,g}^2,\sigma_{0,g}^2)\Bigr]\nonumber\\
&\quad+\frac{2M_{0,1}M_{0,2}}{M_0}\,\psi(\dmiss;\sigma_{0,1}^2,\sigma_{0,2}^2),
\label{eq:Et02}\\
\mathbb{E}[s^2]&=\frac{1}{\Ktot}\sum_{g=1}^{2}\Bigl[K_g\,\psi(q_{d,g};q_{d,g},q_{d,g})
+K_g(K_g-1)\,\psi(q_{o,g};q_{d,g},q_{d,g})\Bigr],
\label{eq:Es2}\\
\mathbb{E}[ts]&=\frac{1}{\sqrt{M_0\Ktot}}\sum_{g=1}^{2}M_{0,g}K_g\,\psi(r_g;\sigma_g^2,q_{d,g}),\nonumber\\
\mathbb{E}[t_0s]&=\frac{1}{\sqrt{M_0\Ktot}}\sum_{g=1}^{2}M_{0,g}K_g\,\psi(r_g;\sigma_{0,g}^2,q_{d,g}),
\label{eq:Ets_cross}
\end{align}
and
\begin{equation}
\Ets=\tfrac12\mathbb{E}[t^2]-\mathbb{E}[ts]+\tfrac12\mathbb{E}[s^2],\qquad
\Etzs=\tfrac12\mathbb{E}[t_0^2]-\mathbb{E}[t_0s]+\tfrac12\mathbb{E}[s^2].
\label{eq:errors_closed}
\end{equation}
\end{proposition}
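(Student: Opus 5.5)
The proof is direct: expand each second moment as a double sum over hidden units and apply Lemma~\ref{lem:arcsin} to every pair. By Remark~\ref{rem:realizability}, all preactivations $y_m$, $y^{(0)}_a$, $x_j$ are jointly Gaussian with mean zero. Each pairwise expectation $\mathbb{E}[g(u)g(v)]$ therefore equals $\psi(C;V_u,V_v)$, and only the variance and covariance of each pair need to be tabulated. For example, $\mathbb{E}[t^2]=\frac{1}{M_0}\sum_{m,m'}\mathbb{E}[g(y_m)g(y_{m'})]$. The expansions of $\mathbb{E}[t_0^2]$, $\mathbb{E}[s^2]$, $\mathbb{E}[ts]$ and $\mathbb{E}[t_0s]$ have the same form, with normalizations $1/M_0$, $1/\Ktot$ and $1/\sqrt{M_0\Ktot}$; under the mirror construction \eqref{eq:mirror} the teacher's $M$ and $M_g$ are replaced by $M_0$ and $M_{0,g}$.

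The covariance bookkeeping goes case by case.

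For the teacher, a diagonal pair has variance $\sigma_g^2=t_d+t_{o,g}$, which gives $\psi(\sigma_g^2;\sigma_g^2,\sigma_g^2)$ with multiplicity $M_{0,g}$. Two distinct units in the same group share only $U_g$, so their covariance is $t_d$, with multiplicity $M_{0,g}(M_{0,g}-1)$. Units in different groups are independent, so those terms vanish because $\psi(0;\cdot,\cdot)=0$.

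For the true teacher, the extra term $\sqrt{\dmiss}\,U_3$ shifts the variance to $\sigma_{0,g}^2$ and the within-group covariance to $t_d+\dmiss$. It also makes cross-group pairs covary with covariance $\dmiss$. There are $2M_{0,1}M_{0,2}$ ordered cross-group pairs, which produces the last term of \eqref{eq:Et02}.

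For the student, \eqref{eq:cov_structure} gives diagonal terms $q_{d,g}$ and within-group covariance $q_{o,g}$ with multiplicity $K_g(K_g-1)$. Cross-group student covariances are zero by the closure ansatz of Remark~\ref{rem:closure}.

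For the cross moment $\mathbb{E}[ts]$, a teacher unit of group $g$ and a student unit of the same group have covariance $r_g$ by \eqref{eq:cov_structure}. This holds for every teacher unit $m$, not only a representative one: by \eqref{eq:student_gaussian_rep}, $x^{(g)}$ correlates with $y_m$ only through $U_g$, and is independent of the unit noise $\varepsilon_m$. Cross-group pairs again vanish under the closure ansatz. The multiplicity is $M_{0,g}K_g$.

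For $\mathbb{E}[t_0s]$ the key observation is that $U_3$ and the true-teacher noises $\varepsilon^{(0)}_a$ are orthogonal to every student direction in the representation \eqref{eq:student_gaussian_rep}. The covariance is therefore still $\sqrt{t_d}\,a_g=r_g$, and only the teacher-side variance changes, to $\sigma_{0,g}^2$. Cross-group pairs remain zero because the student has no overlap with $U_3$, so $\psi(r_g;\sigma_{0,g}^2,q_{d,g})$ appears with multiplicity $M_{0,g}K_g$.

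Finally, \eqref{eq:errors_closed} follows by expanding $\frac12\mathbb{E}[(t-s)^2]$ and $\frac12\mathbb{E}[(t_0-s)^2]$ and substituting the moments above.

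The main obstacle is justifying the zero student overlaps with $U_3$ and with the noises $\varepsilon^{(0)}$ (and $\varepsilon_m$). This is where the statement relies on the reduction rather than on pure algebra. I would argue it as follows. The update \eqref{eq:sgd_rule} moves $\boldsymbol J_j$ only along $\boldsymbol\xi$ weighted by $A=t-s$, and $A$ contains no $U_3$. In the $N\to\infty$ ODE limit, the overlap of $\boldsymbol J_j$ with $\bm u_3$ therefore has zero drift; its fluctuation contribution is $O(1/N)$ per step, which is $O(1)$ over the $O(N)$ steps in unit time $\alpha$, so its vanishing is exactly the assumption that the student initialization has zero overlap with $\bm u_3$. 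The same reasoning applies to $\bm e^{(0)}_a$, since these directions never enter $A$. Overlaps with the teacher's own noises $\varepsilon_m$ are set to zero by the closure ansatz of Remark~\ref{rem:closure}.
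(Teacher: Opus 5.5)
Your proposal is correct and is essentially the paper's proof. Like the paper, you expand each second moment over ordered unit pairs, apply Lemma~\ref{lem:arcsin} to every pair, and tabulate variances and covariances, taking the student to be uncorrelated with $U_3$, $\varepsilon$, $\varepsilon^{(0)}$ and across groups; the paper simply reads these zeros off the reduced covariance structure.

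Your extra dynamical justification is not needed for the statement, and it misstates two things. First, the drift $\eta\langle A\,g'(x_j)\,U_3\rangle$ vanishes because, on the manifold of zero student overlaps with $\bm u_3$, $A\,g'(x_j)$ is independent of $U_3$, which makes that manifold invariant; it is not because $A$ intrinsically lacks $U_3$. Second, the zero-mean $O(1/N)$ per-step fluctuations accumulate to $O(1/\sqrt{N})$, not $O(1)$, over $O(N)$ steps, and that vanishing is exactly why a zero initial overlap persists.
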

\begin{proof}
Recall that $\sigma_g^2$ and $\sigma_{0,g}^2$ are defined in Eq.~\eqref{eq:sigma_def}. Each term is Lemma~\ref{lem:arcsin} applied to a pair of preactivations. By definition, the covariances are: $t_d$ for within-group teacher pairs and $0$ across groups; $t_d+\dmiss$ for within-group true-teacher pairs and $\dmiss$ across groups (sharing $U_3$); $\sqrt{t_d}\,a_g=r_g$ for same-group teacher--student and true-teacher--student pairs (Eq.~\eqref{eq:a_reparam}; $U_3,\varepsilon,\varepsilon^{(0)}$ are uncorrelated with the student) and $0$ across groups; $q_{o,g}$ for within-group student pairs and $0$ across groups.
\end{proof}

The central separation of this paper follows immediately from the closed forms \eqref{eq:Et2}--\eqref{eq:errors_closed}.

\begin{theorem}[Invariance and monotonicity in $\dmiss$]
\label{prop:invariance_monotone}
Fix the order parameters $(r_g,q_{d,g},q_{o,g})_{g=1,2}$ arbitrarily (with $r_g\ge0$). Then:
\begin{itemize}
\item[(i)] The right-hand sides of the ODEs \eqref{eq:ode_r}--\eqref{eq:ode_qo} and $\Ets$ do not depend on $\dmiss$. In particular, under identical initial conditions, the entire trajectory of the order parameters and the time evolution of $\Ets$ are exactly invariant to $\dmiss$.
\item[(ii)] $\Etzs$ is strictly increasing in $\dmiss$. Consequently,
$\dfrac{\partial\Delta}{\partial\dmiss}=\dfrac{\partial\Etzs}{\partial\dmiss}>0$.
\end{itemize}
\end{theorem}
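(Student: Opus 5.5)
The plan is to read both parts off the closed forms of Proposition~\ref{prop:closedform}, together with one structural observation about which random variables enter the ODEs.

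\textbf{Part (i).} Every expectation on the right-hand sides of \eqref{eq:ode_r}--\eqref{eq:ode_qo} is a Gaussian average of a function of $A=t-s$, the teacher preactivations $y$, and the student preactivations $x^{(g)}_i$. By Remark~\ref{rem:realizability} and the representation \eqref{eq:student_gaussian_rep}, the joint law of these variables is fixed by $t_d,t_{o,g}$ and $(r_g,q_{d,g},q_{o,g})$. The variables $U_3$ and $\varepsilon^{(0)}$ never appear, so $\dmiss$ does not enter the covariance matrix over which the averages are taken.

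Hence the vector field of the ODE is independent of $\dmiss$. Uniqueness of solutions, or simply identical Euler iterates if one argues at the level of the scheme \eqref{eq:euler_r}--\eqref{eq:euler_qo} with common random numbers, then gives identical trajectories for identical initial conditions. Finally, $\Ets$ in \eqref{eq:errors_closed} is built from \eqref{eq:Et2}, \eqref{eq:Es2} and $\mathbb{E}[ts]$. These involve only $\sigma_g^2$, $t_d$, $q_{d,g}$, $q_{o,g}$ and $r_g$, so $\Ets$ is invariant along the common trajectory.

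\textbf{Part (ii).} With the order parameters fixed, $\tfrac12\mathbb{E}[s^2]$ does not depend on $\dmiss$. It therefore suffices to show that $\tfrac12\mathbb{E}[t_0^2]$ is strictly increasing in $\dmiss$ and that $-\mathbb{E}[t_0s]$ is nondecreasing in $\dmiss$. Since $\arcsin$ is increasing on $[-1,1]$, in each case I only need the sign of the derivative of the arcsine argument; write $\delta=\dmiss$ below.
\begin{itemize}
\item[(a)] Diagonal term. The argument $\sigma_{0,g}^2/(1+\sigma_{0,g}^2)$ has derivative $(1+\sigma_{0,g}^2)^{-2}>0$.
\item[(b)] Within-group term. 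For $(t_d+\delta)/(1+\sigma_g^2+\delta)$, the quotient rule gives numerator $(1+\sigma_g^2+\delta)-(t_d+\delta)=1+t_{o,g}>0$.
\item[(c)] Cross-group term. Set $D=\sqrt{(1+\sigma_{0,1}^2)(1+\sigma_{0,2}^2)}$. The derivative of $\delta/D$ is
\[
\frac1D\Bigl[1-\tfrac12\Bigl(\tfrac{\delta}{1+\sigma_{0,1}^2}+\tfrac{\delta}{1+\sigma_{0,2}^2}\Bigr)\Bigr],
\]
which is positive because each ratio $\delta/(1+\sigma_g^2+\delta)$ is less than $1$.
\item[(d)] True-teacher--student term. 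For $r_g\ge0$, the argument $r_g/\sqrt{(1+\sigma_{0,g}^2)(1+q_{d,g})}$ is nonincreasing in $\delta$, so $-\mathbb{E}[t_0s]$ is nondecreasing.
\end{itemize}

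Since all coefficients in \eqref{eq:Et02} are nonnegative, and the diagonal term (a) carries the coefficient $M_{0,g}\ge1$, the derivative $\partial\Etzs/\partial\dmiss$ is strictly positive. Combining with (i), $\partial\Delta/\partial\dmiss=\partial\Etzs/\partial\dmiss-\partial\Ets/\partial\dmiss=\partial\Etzs/\partial\dmiss>0$.

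The $M_0$-amplification claimed in the text is visible here as well. The within-group term (b) carries the coefficient $M_{0,g}(M_{0,g}-1)/M_0$, which grows linearly in $M_0$ at fixed group proportions.

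\textbf{Main obstacle.} The only step that is not immediate is (c), where $\dmiss$ appears in both the numerator and the denominator of the arcsine argument. It is settled by the elementary bound $\delta/(1+\sigma_g^2+\delta)<1$. One must also note that the sign assumption $r_g\ge0$ is exactly what makes (d) go the right way.
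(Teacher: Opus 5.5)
Your proposal is correct and follows the paper's proof essentially step for step. Part (i) rests on the absence of $U_3$ from the joint law of $(t,s,A)$. Part (ii) rests on monotonicity of the arcsine in the same three arguments: the diagonal term, the within-group term with constant numerator--denominator gap $1+t_{o,g}$, and the cross-group term. For the last one, your expression $\frac{1}{D}\bigl[1-\frac{\dmiss}{2}\bigl(\frac{1}{a+\dmiss}+\frac{1}{b+\dmiss}\bigr)\bigr]$ is algebraically identical to the paper's $\bigl(2ab+\dmiss(a+b)\bigr)/\bigl(2((a+\dmiss)(b+\dmiss))^{3/2}\bigr)$. As in the paper, you use $r_g\ge0$ to handle the $\mathbb{E}[t_0s]$ term. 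Your extra remarks (uniqueness and identical Euler iterates for the trajectory claim, and strict positivity of the derivative via the diagonal term) are harmless refinements of the same argument.
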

\begin{proof}
(i) The joint distribution of $t$, $s$, and the error signal $A=t-s$ is determined solely by preactivations that contain no $U_3$, so $\dmiss$ appears neither in the expectations in \eqref{eq:ode_r}--\eqref{eq:ode_qo} nor in $\Ets$ of \eqref{eq:errors_closed}.

(ii) The $\dmiss$-dependent terms in \eqref{eq:errors_closed} are $\mathbb{E}[t_0^2]$ and $-\mathbb{E}[t_0s]$. By the monotonicity of $\psi$ in its first (covariance) argument and the fact that
\[
\frac{\sigma_{0,g}^2}{1+\sigma_{0,g}^2},\qquad
\frac{t_d+\dmiss}{1+\sigma_g^2+\dmiss},\qquad
\frac{\dmiss}{\sqrt{(1+\sigma_{0,1}^2)(1+\sigma_{0,2}^2)}}
\]
are all strictly increasing in $\dmiss$ (for the second, the difference between denominator and numerator equals the constant $1+t_{o,g}>0$; for the third, the derivative is $\bigl(2ab+\dmiss(a+b)\bigr)/\bigl(2((a+\dmiss)(b+\dmiss))^{3/2}\bigr)>0$ with $a=1+\sigma_1^2$, $b=1+\sigma_2^2$), the moment $\mathbb{E}[t_0^2]$ is strictly increasing. Furthermore, $\psi(r_g;\sigma_{0,g}^2,q_{d,g})$ is (strictly if $r_g>0$) decreasing in $\dmiss$, so $-\mathbb{E}[t_0s]$ is nondecreasing. Since the diagonal contribution is always strict, $\Etzs$ is strictly increasing. As $\Ets$ is invariant by (i), $\partial_{\dmiss}\Delta=\partial_{\dmiss}\Etzs>0$.
\end{proof}

\begin{corollary}[Amplification by $M_0$]
\label{cor:M0_amplify}
For the symmetric split $M_{0,1}=M_{0,2}=M_0/2$ with $t_{o,1}=t_{o,2}=t_o$ (writing $\sigma_0^2:=t_d+t_o+\dmiss$),
\begin{equation}
\frac{\partial\Delta}{\partial\dmiss}
=\frac{M_0}{4}\left[
\frac{\partial}{\partial\dmiss}\psi(t_d+\dmiss;\sigma_0^2,\sigma_0^2)
+\frac{\partial}{\partial\dmiss}\psi(\dmiss;\sigma_0^2,\sigma_0^2)
\right]
+O(\sqrt{M_0}),
\end{equation}
and the bracket is positive. That is, the rate at which the mismatch increases $\Delta$ is amplified linearly in the complexity $M_0$ of the true teacher.
\end{corollary}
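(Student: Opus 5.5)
The plan is to differentiate the closed forms of Proposition~\ref{prop:closedform} directly under the symmetric split. First, by Theorem~\ref{prop:invariance_monotone}(i), $\Ets$ does not depend on $\dmiss$. Hence
\[
\partial_{\dmiss}\Delta=\partial_{\dmiss}\Etzs=\tfrac12\partial_{\dmiss}\mathbb{E}[t_0^2]-\partial_{\dmiss}\mathbb{E}[t_0 s],
\]
and $\mathbb{E}[s^2]$ drops out. Throughout, the order parameters $(r_g,q_{d,g},q_{o,g})$ and $\Ktot$ are held fixed, and the asymptotics are in $M_0$ alone.

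Second, I substitute $M_{0,1}=M_{0,2}=M_0/2$ and $\sigma_{0,1}^2=\sigma_{0,2}^2=\sigma_0^2$ into \eqref{eq:Et02}. This gives
\[
\mathbb{E}[t_0^2]=\psi(\sigma_0^2;\sigma_0^2,\sigma_0^2)+\Bigl(\tfrac{M_0}{2}-1\Bigr)\psi(t_d+\dmiss;\sigma_0^2,\sigma_0^2)+\tfrac{M_0}{2}\,\psi(\dmiss;\sigma_0^2,\sigma_0^2).
\]
Halving and differentiating produces the leading term $\frac{M_0}{4}\bigl[\partial\psi(t_d+\dmiss;\cdot)+\partial\psi(\dmiss;\cdot)\bigr]$. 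The remaining pieces, $\frac12\partial\psi(\sigma_0^2;\cdot)-\frac12\partial\psi(t_d+\dmiss;\cdot)$, are $O(1)$. They are bounded because each arcsine argument stays strictly below $1$: the argument $\sigma_0^2/(1+\sigma_0^2)$ is at most $1/(1+t_d+t_o)<1$ away from $1$ on any bounded $\dmiss$-range, and the other arguments are even smaller.

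Third, I treat the cross term. By \eqref{eq:Ets_cross},
\[
\mathbb{E}[t_0s]=\frac{\sqrt{M_0}}{2\sqrt{\Ktot}}\sum_g K_g\,\psi(r_g;\sigma_0^2,q_{d,g}).
\]
This is $\sqrt{M_0}$ times a factor independent of $M_0$. Its $\dmiss$-derivative is therefore $O(\sqrt{M_0})$, provided $\partial_{\dmiss}\psi(r_g;\sigma_0^2,q_{d,g})$ is finite. That holds because the validity condition \eqref{eq:validity} gives $r_g^2\le t_d q_{d,g}<(1+\sigma_0^2)(1+q_{d,g})$, so the arcsine argument is bounded away from $\pm1$. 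The $O(1)$ terms from the second step are absorbed into $O(\sqrt{M_0})$.

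Finally, positivity of the bracket follows from the argument already used in the proof of Theorem~\ref{prop:invariance_monotone}(ii). Both $\frac{t_d+\dmiss}{1+\sigma_0^2}$ and $\frac{\dmiss}{1+\sigma_0^2}$ are strictly increasing in $\dmiss$: the first because its denominator minus numerator is the constant $1+t_o$, the second as the special case $a=b$ of the derivative computed there. Since $\arcsin$ is strictly increasing with positive derivative on $(-1,1)$, both derivatives in the bracket are strictly positive.

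The main obstacle is only bookkeeping: making precise that the $O(\sqrt{M_0})$ remainder is uniform. This requires $\Ktot$ and the order parameters fixed, and the arcsine arguments bounded away from $1$. I would state this dependence explicitly rather than try to control it in $\Ktot$ as well.
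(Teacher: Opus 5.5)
Your proposal is correct and follows the paper's own argument: substituting the symmetric split into \eqref{eq:Et02} gives the coefficients $\frac{M_0}{4}-\frac12$, $\frac{M_0}{4}$ and $O(1)$ for the within-group, cross-group and diagonal terms, and $\mathbb{E}[t_0s]$ carries the prefactor $\sqrt{M_0}$; your explicit checks that the remainder derivatives are finite and that the bracket is positive (via the monotonicity argument of Theorem~\ref{prop:invariance_monotone}(ii)) make precise what the paper leaves implicit. One harmless wording slip: the distance $1/(1+\sigma_0^2)$ of $\sigma_0^2/(1+\sigma_0^2)$ from $1$ needs to be bounded \emph{below}, not above, and in fact $\partial_{\dmiss}\psi(\sigma_0^2;\sigma_0^2,\sigma_0^2)=\frac{2}{\pi}\bigl((1+\sigma_0^2)\sqrt{1+2\sigma_0^2}\bigr)^{-1}$ is bounded for all $\dmiss\ge0$, so you do not need to restrict to a bounded $\dmiss$-range.
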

\begin{proof}
In \eqref{eq:Et02}, the coefficient of the within-group pair term is $\frac12\cdot\frac{2\cdot\frac{M_0}{2}(\frac{M_0}{2}-1)}{M_0}=\frac{M_0}{4}-\frac12$, that of the cross-group pair term is $\frac12\cdot\frac{2(M_0/2)^2}{M_0}=\frac{M_0}{4}$, that of the diagonal term is $O(1)$, and the contribution of $\mathbb{E}[t_0s]$ is $O(\sqrt{M_0})$ from the form $M_{0,g}K_g/\sqrt{M_0\Ktot}$.
\end{proof}

\begin{remark}[The gap at $\dmiss=0$ under the mirror construction]
\label{rem:delta0}
Under the mirror construction \eqref{eq:mirror}, at $\dmiss=0$ we have $\sigma_{0,g}^2=\sigma_g^2$, so \eqref{eq:Et02} coincides with \eqref{eq:Et2} and the two expressions in \eqref{eq:Ets_cross} coincide; hence $\Etzs=\Ets$, i.e., $\Delta=0$ holds identically over the whole phase diagram. At the same time $\Etzt>0$ (the unit-specific noises $\varepsilon$ and $\varepsilon^{(0)}$ are independent), but the cross term of Proposition~\ref{prop:gap_decomp} exactly cancels it: $\mathbb{E}[(t_0-t)(t-s)]=-\Etzt$. Note that this equality holds under the closure ansatz (Remark~\ref{rem:closure}); in the full order-parameter system the student can develop finite overlaps with the teacher's unit-specific noises $\varepsilon_m$, so $\Delta$ can take small positive values even at $\dmiss=0$.
\end{remark}

\subsection{Implementation}
\label{subsec:impl_crn}

The expectations on the right-hand sides of \eqref{eq:ode_r}--\eqref{eq:ode_qo} are evaluated by Monte Carlo based on the Gaussian representation \eqref{eq:student_gaussian_rep}, prioritizing implementation flexibility: at each time step, $(U_1,U_2,\varepsilon_m,W_g,Z_i^{(g)})$ are generated from $\nsamp$ standard normal draws, and $A=t-s$ together with the required products (e.g., $A g'(x)y$, $A g'(x)x$, $A^2(g'(x))^2$) are averaged to obtain the RHS. The errors $\Ets,\Etzs,\Delta$, in contrast, are evaluated by the closed forms of Proposition~\ref{prop:closedform}, so no Monte Carlo noise enters the evaluation stage (noise affects only the trajectories of the order parameters).

After each Euler update, the validity condition \eqref{eq:validity} is imposed in each group in the order
\begin{equation}
q_{d,g}\leftarrow\max(q_{d,g},0),\quad
r_g\leftarrow\mathrm{clip}\bigl(r_g;0,\sqrt{t_d\,q_{d,g}}\bigr),\quad
q_{o,g}\leftarrow\mathrm{clip}\bigl(q_{o,g};r_g^2/t_d,\,q_{d,g}\bigr).
\label{eq:clip}
\end{equation}
In particular, the lower clip on $q_{o,g}$ implements the geometric constraint that alignment with the shared factor forces within-group correlation (see the discussion below Eq.~\eqref{eq:validity}).

Since, by Theorem~\ref{prop:invariance_monotone}(i), the learning dynamics does not depend on $\dmiss$, the ODE is integrated \textbf{only once} per grid point $(M_0,\Ktot)$, and the final order parameters are evaluated in closed form for all values of $\dmiss$. This corresponds to exact common random numbers (CRN) for the sweep over $\dmiss$, guaranteeing that the difference,  ``the $\Ets$ contours do not move while $\Etzs$ and $\Delta$ do'', is observed as a structural difference, not through Monte Carlo noise. For reproducibility, random numbers are generated from deterministic seeds determined by the grid indices and a base seed.

\section{Results: Phase Diagrams and the Gap}
\label{sec:results}

In this section we visualize the errors $\Ets,\Etzs$ and the gap $\Delta$ defined in Section~\ref{sec:setup} as phase diagrams over the $(M_0,\Ktot)$ plane, and show their systematic deformation under the mismatch strength $\dmiss$.

\subsection{Experimental settings}
\label{subsec:exp_setting}
The model parameters are $t_d=0.5$ and $t_{o,1}=t_{o,2}=0.5$ (hence $\sigma_g^2=1$); the true-teacher split is $M_{0,1}=\lceil M_0/2\rceil$, $M_{0,2}=\lfloor M_0/2\rfloor$; the teacher follows the mirror construction \eqref{eq:mirror}; and the student split is $K_1=\lceil \Ktot/2\rceil$, $K_2=\lfloor \Ktot/2\rfloor$. The grid of the phase diagrams is $M_0\in\{2,4,6,8,10,12\}$ and $\Ktot\in\{1,2,\dots,8\}$. The ODEs are integrated by the Euler method with effective learning rate $\eta=0.2$ and step size $d\alpha=0.1$ up to $\alpha_{\max}=150$ (we confirmed that the order parameters of all grid points have reached stationarity by this time). The initial values are $r_g=0$, $q_{d,g}=0.25$, $q_{o,g}=0$; the number of Monte Carlo samples for the RHS is $\nsamp=2000$ (with $\nsamp\in\{500,2000,8000\}$ in the sensitivity test). The mismatch strengths are $\dmiss\in\{0,0.05,0.15\}$. The thresholds for the regime classification are $\tau_{ts}=0.15$ and $\tau_{t_0s}=0.18$. Random seeds are determined deterministically from the grid indices.

\subsection{Baseline ($\dmiss=0$): agreement of $\Ets$ and $\Etzs$}
\label{subsec:baseline}
We first confirm the mismatch-free baseline ($\dmiss=0$). Under the mirror construction, Remark~\ref{rem:delta0} gives $\Etzs=\Ets$ and $\Delta=0$ exactly over the whole phase diagram: achieving teacher imitation (decrease of $\Ets$) coincides with reducing the true error (decrease of $\Etzs$).

Figure~\ref{fig:contours_main} (left panel, $\dmiss=0$) overlays the contours of $\Ets$ (solid lines) on the heat map of $\Etzs$; in this panel the two landscapes coincide, so teacher-output approximation translates directly into the true task. The error landscape itself decreases with $\Ktot$ and increases with $M_0$: in the leftmost column ($\Ktot=1$), $\Ets$ increases monotonically with $M_0$, reaching $\Ets\approx0.34$ at $M_0=12$ (capacity-limited), while for large $\Ktot$ it decreases to about $\Ets\approx0.09$.

\begin{figure}[t]
  \centering
  \begin{subfigure}{0.32\linewidth}
    \centering
    \includegraphics[width=\linewidth]{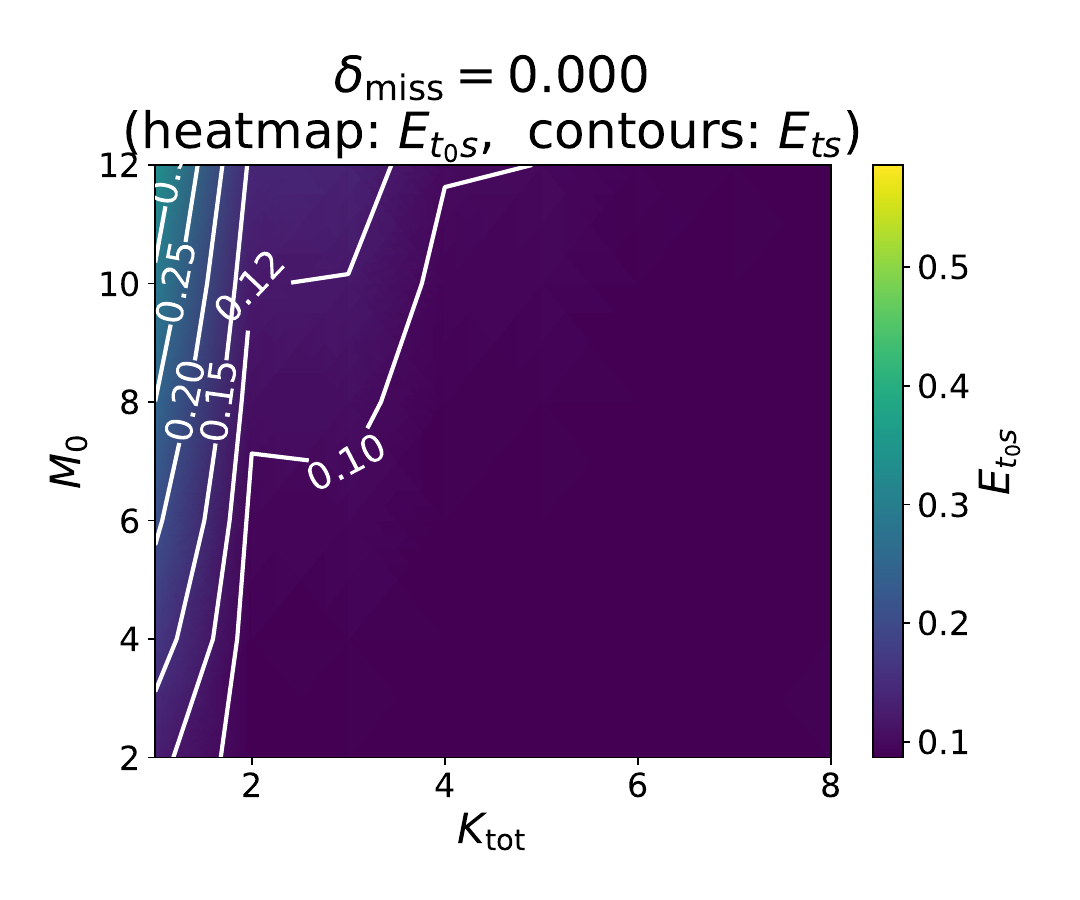}
    \caption{$\dmiss=0.000$}
  \end{subfigure}\hfill
  \begin{subfigure}{0.32\linewidth}
    \centering
    \includegraphics[width=\linewidth]{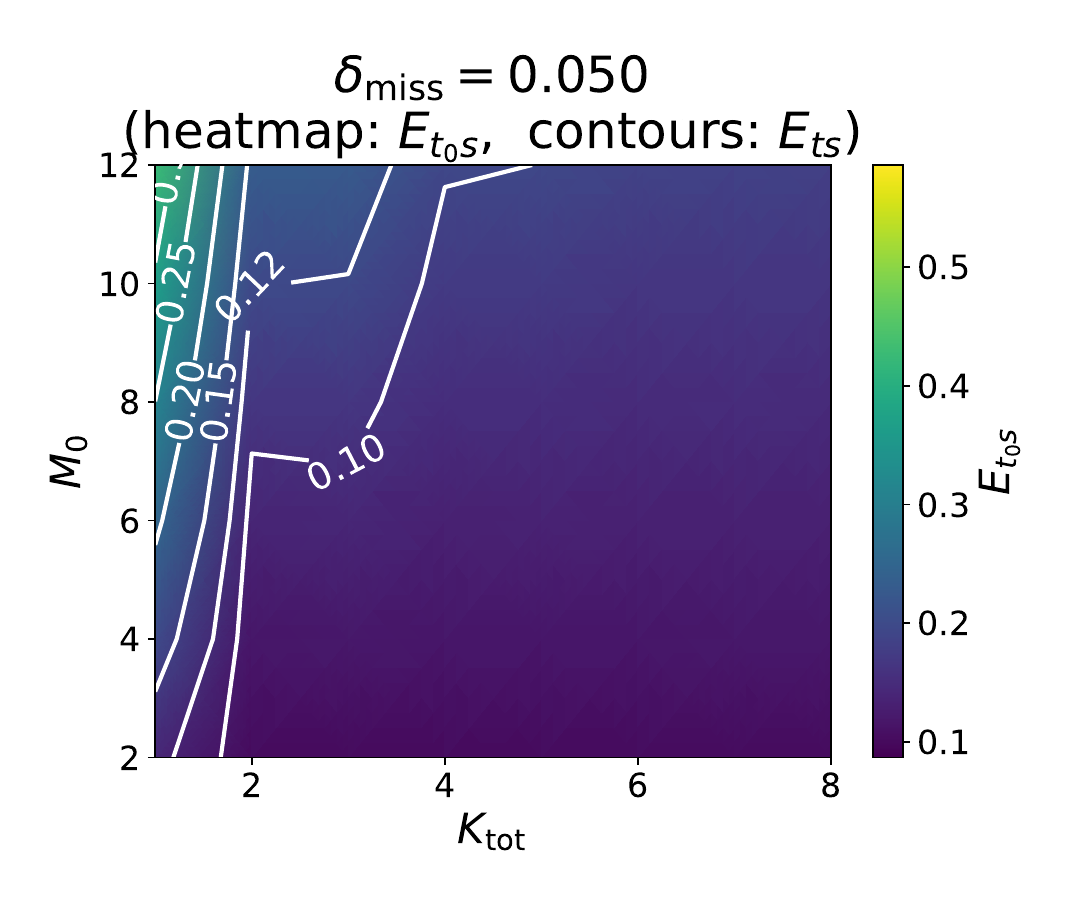}
    \caption{$\dmiss=0.050$}
  \end{subfigure}\hfill
  \begin{subfigure}{0.32\linewidth}
    \centering
    \includegraphics[width=\linewidth]{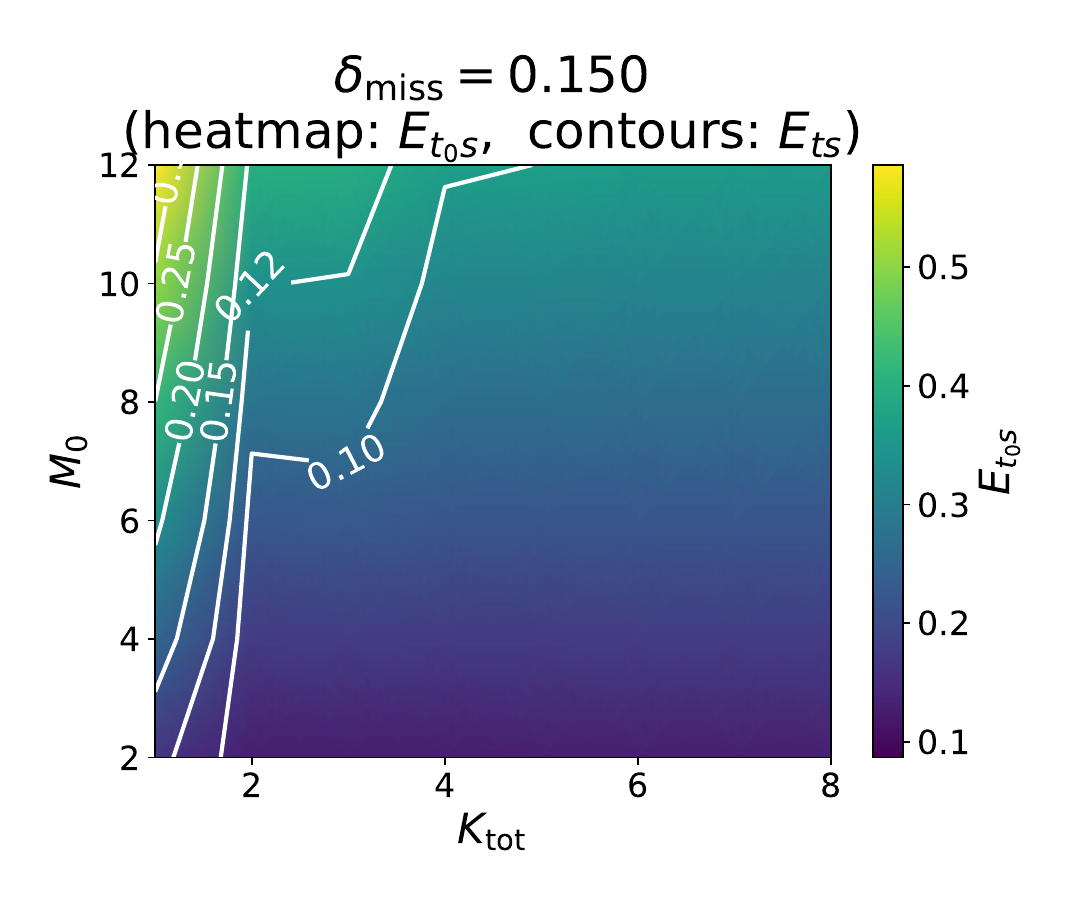}
    \caption{$\dmiss=0.150$}
  \end{subfigure}
  \caption{Separation of the error landscapes under the mismatch strength $\dmiss$. The background heat map shows the true error $\Etzs$ (color scale shared across panels); the solid contours show the distillation error $\Ets$. Increasing $\dmiss$ leaves the $\Ets$ contours unchanged (Theorem~\ref{prop:invariance_monotone}(i)) while the $\Etzs$ landscape rises systematically (Theorem~\ref{prop:invariance_monotone}(ii)). Horizontal axis: student capacity $\Ktot$; vertical axis: true-teacher complexity $M_0$.}
  \label{fig:contours_main}
\end{figure}

\subsection{Increasing mismatch ($\dmiss=0.05,0.15$): separation of contours and landscape}
\label{subsec:mismatch_increase}
Next we introduce mismatch ($\dmiss>0$) and observe the deformation of the phase diagram as its strength increases, using the representative values $\dmiss=0.05$ and $\dmiss=0.15$.

The striking feature of Fig.~\ref{fig:contours_main} (middle and right panels) is that the $\Ets$ contours (solid lines) do not change at all as $\dmiss$ increases. This is exactly Theorem~\ref{prop:invariance_monotone}(i): the distillation ODE is driven only by $A=t-s$, and $\dmiss$ does not enter the learning dynamics (in the implementation the same order-parameter trajectory is reused, so the contours are exactly identical). From the viewpoint of learning that targets the teacher, the latent factor $U_3$ added on the true-teacher side is unobservable, and the difficulty of imitation is unchanged.

The background heat map ($\Etzs$), in contrast, rises systematically as $\dmiss$ increases (Theorem~\ref{prop:invariance_monotone}(ii)). The degradation of $\Etzs$ is more pronounced in regions with large $M_0$ (complex true teachers), consistent with the $M_0$-proportional amplification of Corollary~\ref{cor:M0_amplify}. This landscape change occurs because the term $\sqrt{\dmiss}\,U_3$ added to the true-teacher preactivation contributes to the true error through $B=t_0-s$, visualizing how the objective of distillation ($\Ets$) and the final objective ($\Etzs$) drift apart.

\subsection{Gap: expansion of the region where $\Delta$ grows monotonically}
\label{subsec:gap_map}

\begin{figure}[t]
  \centering
  \includegraphics[width=\linewidth]{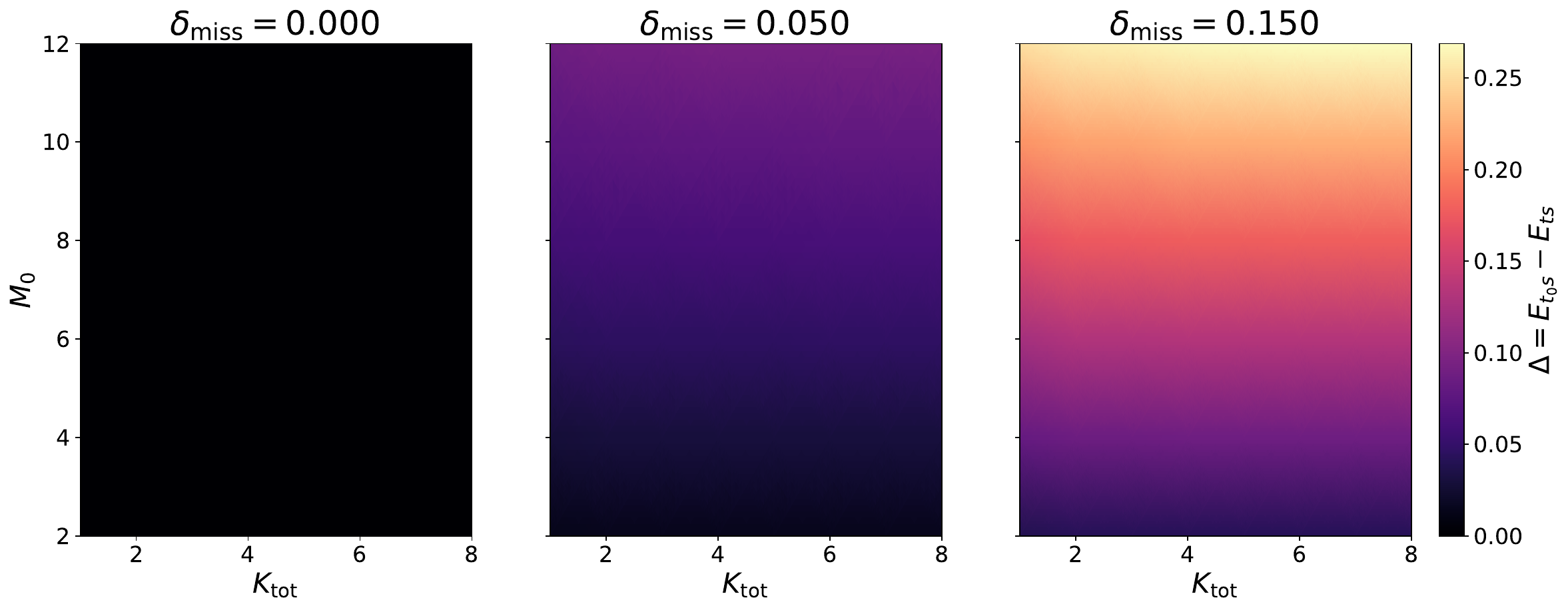}
  \caption{Phase diagram of the gap $\Delta=\Etzs-\Ets$. Each panel corresponds to a different $\dmiss$, with a color scale shared across panels for direct comparison. At $\dmiss=0$ the mirror construction gives $\Delta=0$ identically (Remark~\ref{rem:delta0}); as $\dmiss$ increases, the region with $\Delta>0$ expands from the large-$M_0$ side. Horizontal axis: $\Ktot$; vertical axis: $M_0$.}
  \label{fig:gap_main}
\end{figure}

Figure~\ref{fig:gap_main} shows heat maps of the gap
\begin{equation}
\Delta(M_0,\Ktot;\dmiss)=\Etzs-\Ets .
\end{equation}
The color scale is fixed across panels so that differences due to $\dmiss$ can be compared directly.

Two observations stand out. First, as $\dmiss$ increases, the region where $\Delta$ is positive expands (at $\dmiss=0$, $\Delta\equiv0$). This means the region where the imitation error is small but the true error is large, which is the teacher-miss region, widens. Second, the expansion appears strongly in the direction of increasing $M_0$; in our setting $\Delta\approx0.27$ at $\dmiss=0.15$, $M_0=12$. Intuitively, the more complex the true teacher is and the stronger the factor missed by the teacher, the less teacher imitation translates into true performance (Corollary~\ref{cor:M0_amplify}). We also note that $\Delta$ is nearly flat in the $\Ktot$ direction: the dominant part of $\Delta$ is $\mathbb{E}[t_0^2]-\mathbb{E}[t^2]$, which does not involve the student, while the student-dependent part $\mathbb{E}[ts]-\mathbb{E}[t_0s]$ is comparatively small.

This gap phase diagram quantitatively demonstrates the danger of monitoring only $\Ets$ as the evaluation metric of distillation: a point that looks like a ``success'' region when judged by $\Ets$ alone may be classified as teacher-miss once $\Delta$ is examined. In this sense $\Delta$ is an effective minimal diagnostic for detecting the breakdown of distillation.

\subsection{Regime maps: expansion of the teacher-miss region}
\label{subsec:regime_map}

\begin{figure}[t]
  \centering
  \includegraphics[width=\linewidth]{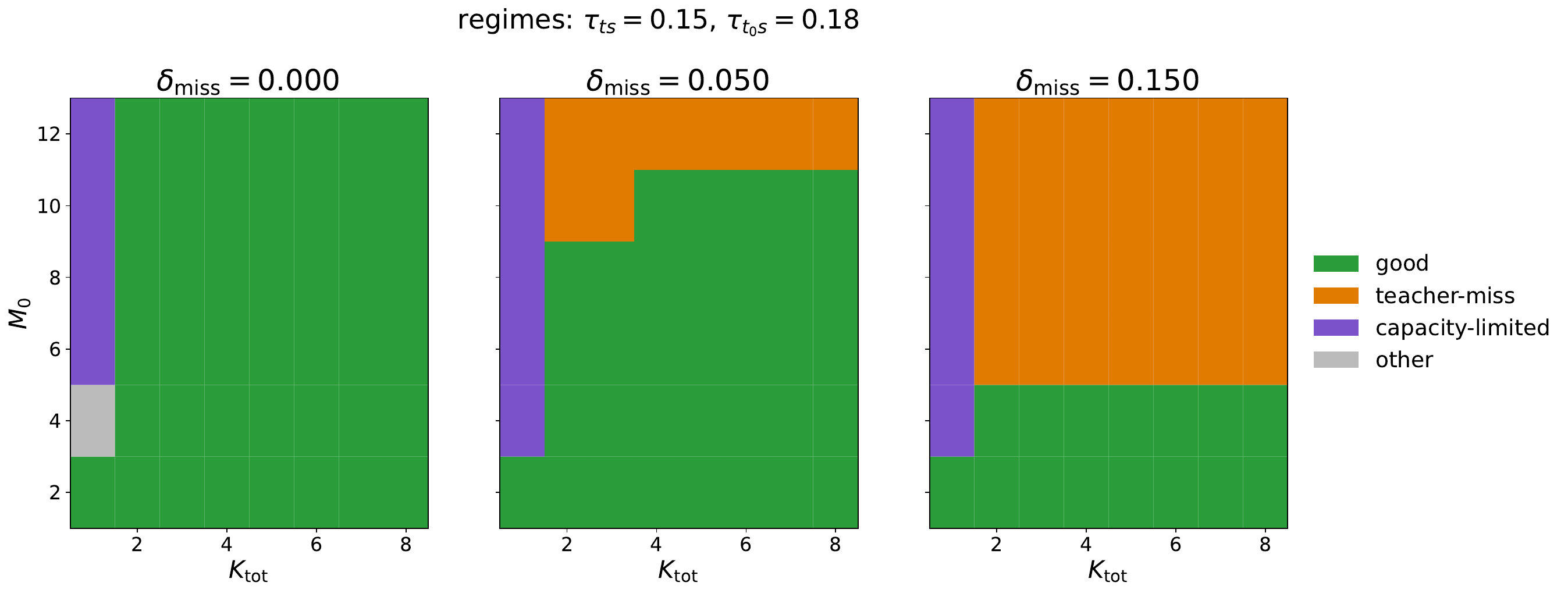}
  \caption{Auxiliary regime maps based on threshold classification with $(\tau_{ts},\tau_{t_0s})=(0.15,0.18)$, dividing the plane into good/teacher-miss/capacity-limited (and other). Because the classification depends on the thresholds, the main conclusions are drawn from the continuous quantities ($\Ets,\Etzs,\Delta$) of Figs.~\ref{fig:contours_main}--\ref{fig:gap_main}.}
  \label{fig:regime_supp}
\end{figure}

Figure~\ref{fig:regime_supp} shows the regime classification (good/teacher-miss/capacity-limited) with thresholds $(\tau_{ts},\tau_{t_0s})$. In this paper the classification is auxiliary; the main conclusions rest on the continuous quantities in Figs.~\ref{fig:contours_main}--\ref{fig:gap_main}. With that caveat, Fig.~\ref{fig:regime_supp} improves the visibility of the boundaries.

The classification shows the expansion of the teacher-miss region with increasing $\dmiss$ as a motion of boundaries. At $\dmiss=0$ the diagram splits only into good and capacity-limited (the small-$\Ktot$ column); as $\dmiss$ increases, points where $\Ets$ is below threshold yet $\Etzs$ exceeds threshold appear from the large-$M_0$ side, widening the region where success judged by the distillation error alone is misleading. Since the boundary locations vary with the choice of thresholds, Fig.~\ref{fig:regime_supp} should be interpreted together with the $\Delta$ maps of Fig.~\ref{fig:gap_main}.

\subsection{Robustness: sensitivity to $\nsamp$}
\label{subsec:robustness}
Because the ODE right-hand sides are evaluated by Monte Carlo, we must confirm that the numerical error induced by the sample size $\nsamp$ does not distort the phase diagrams (the error evaluation itself is in closed form, so $\nsamp$ affects only the order-parameter trajectories). Figure~\ref{fig:nsamp_supp} shows the sensitivity of the regime boundaries and the $\Delta$ contours when $\nsamp\in\{500,2000,8000\}$ (with independently redrawn random seeds). In our setting the global shape of the boundaries is preserved under changes of $\nsamp$, confirming that the conclusion, that is, ``$\Ets$ is invariant to $\dmiss$ while $\Etzs$ and $\Delta$ change'', is not an artifact of numerical noise.

\begin{figure}[t]
  \centering
  \includegraphics[width=\linewidth]{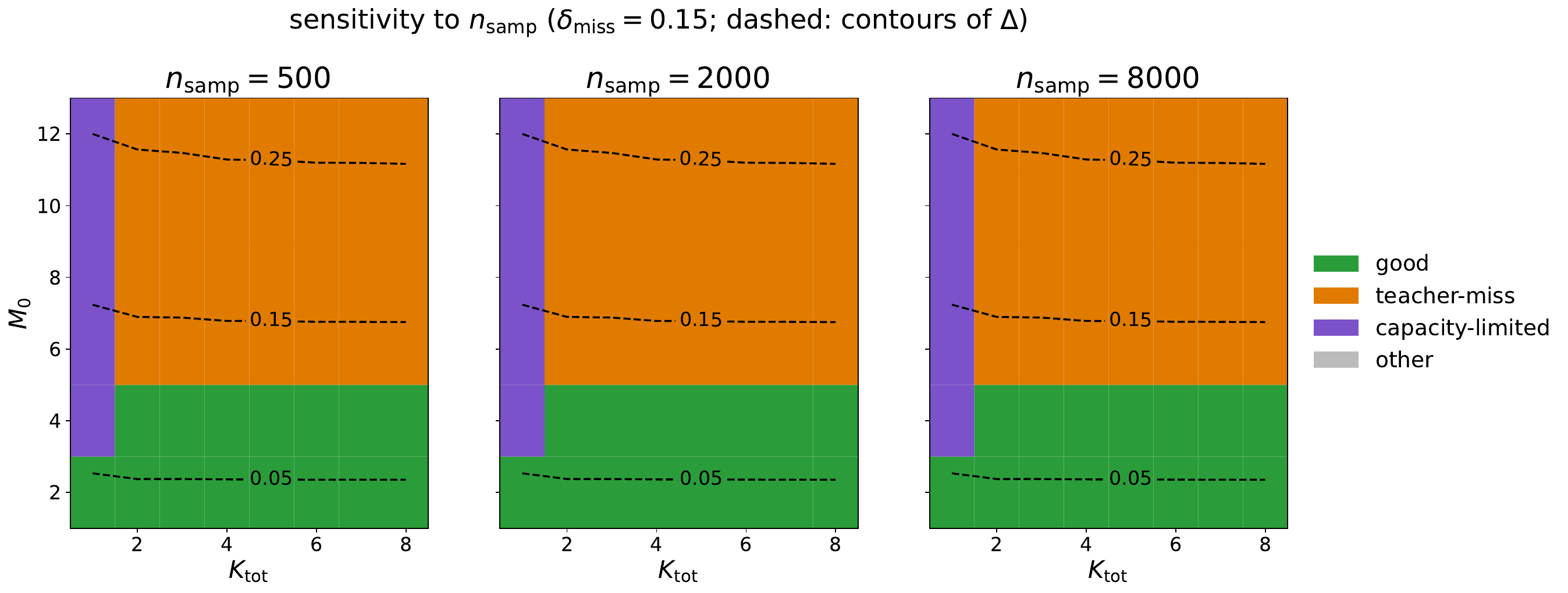}
  \caption{(Auxiliary) sensitivity to $\nsamp$. Regime classification (colors) and contours of $\Delta$ (dashed) at $\dmiss=0.15$, compared for $\nsamp\in\{500,2000,8000\}$ (independent seeds). The global structure of the boundaries is preserved for all $\nsamp$.}
  \label{fig:nsamp_supp}
\end{figure}

\section{Discussion}
\label{sec:discussion}

In this section we organize the phase-diagram deformation observed in Section~\ref{sec:results} from three viewpoints: (i) why it happens, (ii) what it means in practice, and (iii) how far it generalizes. The central phenomenon is the separation in which the distillation error $\Ets$ is invariant to the mismatch strength $\dmiss$ while the true error $\Etzs$ and the gap $\Delta=\Etzs-\Ets$ systematically increase.

\subsection{Why $\Ets$ does not move while $\Etzs$ does}
\label{subsec:why_gap}

This separation is not accidental; it follows necessarily from the structure of the model design and the learning objective (Theorem~\ref{prop:invariance_monotone}). As explained in Section~\ref{subsec:mismatch_insertion}, our mismatch introduces the latent factor $U_3$ only on the true-teacher side; neither the teacher model nor the distillation update rule contains $U_3$. Consequently, the distillation ODE is driven solely by
\[
A=t-s,
\]
and as long as the statistics of $t$ and $s$ are unchanged, varying $\dmiss$ does not change the expected right-hand sides. Hence the order parameters (group-wise $r_g,q_{d,g},q_{o,g}$) at any training time, and the resulting $\Ets$, are invariant to $\dmiss$. This invariance does not rely on the closure ansatz (Remark~\ref{rem:closure}): even in the full order-parameter system, $A$ contains no $U_3$.

The true error, in contrast, is defined through
\[
B=t_0-s,
\]
and the statistics of $t_0$ change with $\sqrt{\dmiss}\,U_3$. This component does not exist on the teacher side, so distillation can neither observe nor optimize the $U_3$-related part. Increasing $\dmiss$ therefore increases the distance between $t_0$ and $s$, and $\Etzs$ deteriorates systematically (Theorem~\ref{prop:invariance_monotone}(ii)). Moreover, the deterioration rate is amplified in proportion to the complexity $M_0$ of the true teacher (Corollary~\ref{cor:M0_amplify}): the missed shared factor $U_3$ rides on every unit of the true teacher, so its contribution accumulates at the order of the number of unit pairs ($\propto M_0^2$) and, even after the $1/\sqrt{M_0}$ normalization, still grows linearly in $M_0$.

Put more plainly, distillation is a procedure that compresses and transfers the teacher's knowledge, so information the teacher does not possess is never transferred. If the teacher misses latent factors of the true teacher, then no matter how accurately the student imitates the teacher, the improvement of the true error is fundamentally limited. Our $\Delta$ phase diagram can be read as a visualization of where, in terms of true complexity $M_0$ and student capacity $\Ktot$, this limitation becomes manifest. It is instructive to contrast this with the linear three-party models of \citet{miyoshi2006moving} and \citet{miyoshi2006ensemble}, in which the imperfection of the observed teacher is stochastic (noise, drift, or ensemble scatter around the truth) and a suitably tuned student can even generalize better than its teacher. In our setting the imperfection is a representational deficit of the teacher's function class; because the missing factor never appears in the supervision signal, no choice of learning rate or capacity allows the student to recover it, and the gap is bounded below by structural terms that grow with $M_0$.

\subsection{Implication: teacher-mimicry metrics alone are insufficient; a gap metric is needed}
\label{subsec:practical}

Using the distillation loss (a teacher-mimicry metric) to monitor progress and declare success is natural. Our results show, however, that this judgment can break down systematically even in a minimal mismatch setting: a decrease of $\Ets$ guarantees that the student is approaching the teacher, but not that it is approaching the true task.

From this viewpoint, at least two elements are important in the evaluation design of distillation pipelines.

First, teacher quality must be assessed outside the distillation loop. When the teacher does not approximate the true teacher well (teacher-miss is large), distillation essentially compresses and transfers this imperfection. In our setting the quality metric is $\Etzt=\frac12\mathbb{E}[(t_0-t)^2]$, and the gap decomposition of Proposition~\ref{prop:gap_decomp},
$\Delta=\Etzt+\mathbb{E}[(t_0-t)(t-s)]$,
shows that in the limit of complete imitation ($s\to t$) we have $\Delta\to\Etzt$: the gap that remains after imitation is exactly the teacher quality.

Second, detecting teacher-miss from quantities observable during distillation requires diagnostics beyond teacher mimicry. The gap $\Delta=\Etzs-\Ets$ proposed here is directly computable in experimental settings where the true teacher is accessible, and it makes teacher-miss visible. In practice the true teacher is often unknown, but there is room to design proxies for $\Delta$ through external evaluations of the teacher or references other than the teacher (different data sets, objectives, or teachers). This concern also interacts with stopping rules: criteria that stop training when a monitored (surrogate) generalization quantity stabilizes, as studied for active learning and Bayesian optimization by \citet{ishibashi2020stopping} and \citet{ishibashi2023stopping}, implicitly assume that the monitored quantity tracks the objective of interest; our analysis exhibits a regime where the natural surrogate ($\Ets$) converges while the target objective ($\Etzs$) has stalled at a level determined by the teacher's miss. At a minimum, our phase diagrams give theoretical support to the claim that teacher-mimicry metrics alone cannot distinguish the failure modes.

In addition, the phase diagrams indicate the direction of remedies. In regions dominated by teacher-miss, increasing the student capacity $\Ktot$ may not improve the true error (corresponding to the near-flatness of $\Delta$ in the $\Ktot$ direction in Fig.~\ref{fig:gap_main}); improving the teacher side (a better teacher, or an enlarged representation space) takes priority. In regions dominated by capacity limitation, the student capacity is the bottleneck even when the teacher quality is sufficient, so increasing $\Ktot$ or improving the architecture acts directly. This distinction reduces wasted trial and error when exploring the design space of distillation.

\subsection{Limitations and extensions}
\label{subsec:limits_extensions}

Our model prioritizes inducing teacher-miss reliably with a minimal modification; the following limitations and extensions are natural.

\paragraph{Multiple hidden factors.}
We expressed the mismatch by a single shared factor $U_3$ with scalar strength $\dmiss$. In reality the missed structure is likely multidimensional. A natural extension replaces $U_3$ by a vector factor (several independent Gaussians) and controls the mismatch strength by a matrix (or its spectrum). The structural fact that latent factors absent from the distillation objective are never learned persists, and the gap phase diagrams should deform in richer ways.

\paragraph{Teacher-side capacity ($M$).}
To control teacher-miss purely through $\dmiss$, we fixed the teacher to the mirror construction \eqref{eq:mirror} ($M=M_0$). More generally, decoupling $M$ from $M_0$ allows analyzing teacher-side capacity shortages beyond the inability to represent $U_3$. This extension is important for studying the interaction between teacher-miss and capacity limitation, and can be presented as three-dimensional phase diagrams over $(M,M_0,\Ktot)$ or as slices in $M$.

\paragraph{Data noise and teacher uncertainty.}
We used Gaussian inputs and SCMs, focusing on model mismatch. Real distillation involves label noise and uncertainty in teacher outputs. These can be incorporated by adding noise to preactivations or observation noise to outputs, with corresponding modifications of the error definitions and the ODE right-hand sides. Importantly, such noise can further blur the relation between teacher-mimicry metrics and the true error, which if anything increases the value of gap-based phase diagrams.

\paragraph{Other activation functions and analytic evaluation.}
We used the error-function activation, and the errors were evaluated by the arcsine closed form of Lemma~\ref{lem:arcsin} (Section~\ref{subsec:impl_crn}). The ODE right-hand sides, in contrast, were evaluated by Monte Carlo for implementation flexibility; replacing them with the known closed forms (Saad--Solla-type multivariate Gaussian integrals) would further improve efficiency and theoretical transparency. The Monte Carlo implementation is robust to modeling changes and readily accommodates extensions (inhomogeneous groups, leakage, multiple factors, activations other than erf), so a practical strategy is to use both: verify with analytic expressions and explore extensions with Monte Carlo.

\section{Conclusion}
\label{sec:conclusion}

We quantified, through a minimal mismatch setting and order-parameter phase diagrams, the situation in which success in teacher-output approximation and success on the true task do not coincide in knowledge distillation. Specifically, the true teacher was extended to include, beyond the teacher's latent space $(U_1,U_2)$, a shared latent factor $U_3$ unobservable to the teacher, with strength controlled by $\dmiss$. Since distillation targets the teacher $t$, the learning dynamics (order-parameter ODEs) is driven only by the error signal $A=t-s$ and $\dmiss$ never enters the right-hand sides, whereas the true error depends on $\dmiss$ directly through $B=t_0-s$.

For this separation structure we established the following, theoretically and experimentally.
\begin{itemize}
\item Under the error-function activation, $\Ets$, $\Etzs$, and $\Delta$ admit arcsine-type closed forms in the order parameters (Lemma~\ref{lem:arcsin}, Proposition~\ref{prop:closedform}).
\item The order-parameter trajectories and $\Ets$ are exactly invariant to $\dmiss$, while $\Etzs$ and $\Delta=\Etzs-\Ets$ are strictly increasing in $\dmiss$, with a rate amplified linearly by the true-teacher complexity $M_0$ (Theorem~\ref{prop:invariance_monotone}, Corollary~\ref{cor:M0_amplify}).
\item The gap decomposes as $\Delta=\Etzt+\mathbb{E}[(t_0-t)(t-s)]$ and coincides with the teacher quality $\Etzt$ in the limit of complete imitation (Proposition~\ref{prop:gap_decomp}).
\item Phase-diagram experiments confirmed that without mismatch ($\dmiss=0$) the landscapes of $\Ets$ and $\Etzs$ coincide (under the mirror construction, $\Delta\equiv0$), and that increasing $\dmiss$ leaves the $\Ets$ contours unchanged while the $\Etzs$ landscape rises systematically and the region of growing $\Delta$ expands from the large-$M_0$ side.
\item The auxiliary threshold-based regime maps also show the expansion of the teacher-miss regime (imitation succeeds but the true error is large) with $\dmiss$; since the classification is threshold-dependent, the main conclusions are based on the continuous quantities ($\Ets,\Etzs,\Delta$).
\item For the Monte Carlo implementation of the RHS expectations, sensitivity tests over $\nsamp$ preserved the global structure of the main boundaries, confirming that the observed phase-diagram deformation is not an artifact of numerical noise.
\end{itemize}

In summary, we have shown as a phase diagram that relying solely on the distillation error (teacher mimicry) as the metric can overlook true performance degradation caused by latent factors the teacher misses. The results support the need, in the evaluation and design of distillation, to distinguish teacher-approximation metrics from true-objective metrics and to introduce a diagnostic (here, $\Delta$) measuring their divergence.

\subsection*{Design rules suggested by the phase diagrams (practical guidance)}
Finally, we summarize the minimal decision rules provided by our phase diagrams. These are practical guidelines based on separating the failure modes, not strict optimality results.
\begin{enumerate}
\item \textbf{Small $\Ets$ but large $\Delta$ (teacher-miss).}
The teacher is likely missing true structure. Increasing the student capacity $\Ktot$ may barely improve the true error, so prioritize teacher-side improvements (enlarging the representation space, a better teacher, or changing the distillation target).
\item \textbf{Large $\Ets$ with small $\Delta$ (capacity-limited).}
The teacher is likely adequate and the student capacity is the bottleneck. Increasing the student capacity or improving its architecture acts directly.
\item \textbf{Both $\Ets$ and $\Etzs$ small (good).}
Distillation is aligned with the objective; further gains are marginal and excessive capacity increases may not be cost-effective.
\end{enumerate}
In practice the true teacher is not directly accessible, but at a minimum one should avoid the shortcut ``improvement of teacher-mimicry metrics $=$ task success,'' and combine external evaluations of teacher quality to detect teacher-miss.

\subsection*{Future work}
Future directions include: (i) generalization to multiple hidden factors (multidimensional mismatch); (ii) phase diagrams over the three-dimensional space $(M,M_0,\Ktot)$ with the teacher capacity $M$ decoupled from $M_0$, to analyze the interaction of teacher-miss and capacity limitation; (iii) extensions to realistic settings with data noise and teacher uncertainty; and (iv) analytic ODE right-hand sides and activations beyond erf, improving theoretical transparency and computational efficiency. Through these, we aim to describe the conditions under which distillation breaks down in greater generality and to consolidate them into evaluation and design guidelines for practice.

\section*{Acknowledgements}
Part of this work is supported by JSPS KAKENHI No. JP25K15267, JP26K23861 and JP26K02989.

%\bibliographystyle{plainnat}
%\bibliography{distil-en}

\end{document}